\crefname{equation}{}{}
\newtheorem{theorem}{Theorem}
\theoremstyle{definition}
\newtheorem{proposition}{Proposition}
\theoremstyle{definition}
\theoremstyle{definition}
\theoremstyle{remark}
\newcommand{\includesvgfullpath}[2][\textwidth]{%
	\filename@parse{#2}%
	\includesvg[inkscapepath=svg-inkscape/\filename@area,width=#1]{#2}%
}
\DeclareMathOperator*{\argmin}{arg\,min}
\newcommand{\R}{\mathbb{R}}
\newcommand{\D}{\mathcal{D}}
\newcommand{\EE}[2]{\mathbb{E}_{#1\!\!}\left[#2\right]}
\def\E#1{\EE{\,}{#1}}
\DeclarePairedDelimiterX{\infdivx}[2]{(}{)}{%
	#1\;\delimsize\|\;#2%
}
\newcommand{\indicator}{\mathbbm{1}}
\def\defeq{\doteq}
\let\mc\mathcal                                             
\def\ceil#1{\lceil #1 \rceil}
\DeclareMathAlphabet{\mathsfit}{\encodingdefault}{\sfdefault}{m}{sl}
\SetMathAlphabet{\mathsfit}{bold}{\encodingdefault}{\sfdefault}{bx}{n}
\newcommand{\rev}[1]{%
	\iftoggle{revision}{%
		\textcolor{blue}{#1}%
	}{%
		#1%
	}%
}
\icmltitlerunning{Probabilistic Calibration in Neural Network Regression}
\begin{document}
	
	\twocolumn[
	\icmltitle{A Large-Scale Study of Probabilistic Calibration \\ in Neural Network Regression}
	
	
	
	
	\begin{icmlauthorlist}
		\icmlauthor{Victor Dheur}{umons}
		\icmlauthor{Souhaib Ben Taieb}{umons}
	\end{icmlauthorlist}
	
	\icmlaffiliation{umons}{Department of Computer Science, University of Mons, Mons, Belgium}
	
	\icmlcorrespondingauthor{Victor Dheur}{victor.dheur@umons.ac.be}
	
	\icmlkeywords{Calibration, Regression, Study, Regularization, Post-hoc, Neural Networks, Deep Learning, Machine Learning, ICML}
	
	\vskip 0.3in
	]
	
	
	
	\printAffiliationsAndNotice{}  
	

	
\begin{abstract}
Accurate probabilistic predictions are essential for optimal decision making. While neural network miscalibration has been studied primarily in classification, we investigate this in the less-explored domain of regression. We conduct the largest empirical study to date to assess the probabilistic calibration of neural networks. We also analyze the performance of recalibration, conformal, and regularization methods to enhance probabilistic calibration. Additionally, we introduce novel differentiable recalibration and regularization methods, uncovering new insights into their effectiveness. Our findings reveal that regularization methods offer a favorable tradeoff between calibration and sharpness. Post-hoc methods exhibit superior probabilistic calibration, which we attribute to the finite-sample coverage guarantee of conformal prediction. Furthermore, we demonstrate that quantile recalibration can be considered as a specific case of conformal prediction. Our study is fully reproducible and implemented in a common code base for fair comparisons.
\end{abstract}
	
	\section{Introduction}
	
	Neural network predictions affect critical decisions in many applications, including medical diagnostics and autonomous driving \citep{Gulshan2016-kw,Guizilini2020-tr}.
	However, effective decision making often requires accurate probabilistic predictions \citep{Gawlikowski2021-ty,Abdar2021-zq}.
	For example, consider a probabilistic regression model that produces 90\% prediction intervals. An important property would be that 90\% of these prediction intervals contain the realizations.
	
	
	For models that output a predictive distribution, \textit{probabilistic calibration} is an important property that states that all quantiles must be calibrated\rev{, i.e., the frequency of realizations below these quantiles must match the corresponding quantile level}. Additionally, predictive distributions should be \rev{sufficiently} sharp (i.e., concentrated around the realizations) \rev{and leverage the information in the inputs}.
	
	
	

	In the classification setting, \citet{Guo2017-ow} found that common neural architectures trained on image and text data were miscalibrated, sparking increased interest in neural network calibration. In a follow-up study, \citet{Minderer2021-xw} showed that more recent neural architectures demonstrate improved calibration. However, there has been less research on calibration for neural probabilistic regression models compared to classification. Therefore, it remains uncertain whether the same results apply to the regression setting. This paper addresses this gap by conducting a comprehensive study on probabilistic calibration for regression using tabular data. We explore various calibration methods, including quantile recalibration \citep{Kuleshov2018-tb} and conformalized quantile regression \citep{Romano2019-kp}. We also consider regularization methods, which have been shown to perform well in the classification setting \citep{Karandikar2021-ml,Popordanoska2022-jz,Yoon2023-ds}.

	We make the following main contributions:
	\begin{enumerate}
		\item We conduct the largest empirical study to date on probabilistic calibration of neural regression models using 57 tabular datasets (\cref{sec:empirical_study,sec:comparative_study}). We consider multiple state-of-the-art calibration methods (Section 5), including post-hoc recalibration, conformal prediction, and regularization methods, with various scoring rules and predictive models. 
		
		\item Building on quantile recalibration, we propose a new differentiable calibration map using kernel density estimation, which provides improved negative log-likelihood compared to baselines. We also introduce two new regularization objectives based on the probabilistic calibration error (\cref{sec:methods}).
		
		
		\item We show that quantile recalibration is a special case of conformal prediction, providing an explanation for their superior performance in terms of probabilistic calibration (\cref{sec:comparative_study}).
	\end{enumerate}

	\section{Background}
	\label{sec:background}
	
	We consider a univariate regression problem where the target variable $Y \in \mc{Y}$ depends on an input variable $X \in \mc{X}$, with $\mc{Y} = \mathbb{R}$ representing the target space and $\mc{X}$ representing the input space. Our objective is to approximate the conditional distribution $P_{Y \mid X}$ using training data $\D = \Set{(X_i, Y_i)}_{i=1}^N$ where $(X_i, Y_i) \overset{\text{i.i.d.}}{\sim} P \equiv P_X \times P_{Y \mid X}$.
	
	A probabilistic predictor $F_\theta: \mc{X} \to \mc{F}$ is a function parametrized by $\theta \in \Theta$ that maps an input $x \in \mc{X}$ to a predictive cumulative distribution function (CDF) $F_\theta(\cdot \mid x)$ in the space $\mc{F}$ of distributions over $\R$.
	Additionally, given $x \in \mc{X}$, we denote the predictive quantile function (QF) by $Q_\theta(\cdot \mid x)$, and probability density function (PDF) by $f_\theta(\cdot \mid x)$.
	Similarly, the marginal CDF, QF, or PDF of a random variable $R$ is denoted by $F_R$, $Q_R$, or $f_R$, respectively.
	
	\paragraph{Probabilistic calibration.}
	
	Given an input $x \in \mc{X}$, the model $F_\theta$ is ideal if it precisely matches the conditional distribution $P_{Y \mid X}$.
	However, learning the ideal model based on finite data is not possible without additional (strong) assumptions \citep{Foygel_Barber2021-ig}. To avoid additional assumptions, we can instead enforce certain desirable properties that are attainable in practice and that a good or ideal forecaster should exhibit. One such property is probabilistic calibration.

	
	

	Let $Z = F_\theta(Y \mid X) \in [0, 1]$ denote the probability integral transform (PIT) of $Y$ conditional on $X$.
	The model $F_\theta$ is \emph{probabilistically calibrated} (also known as PIT-calibrated) if $\forall \alpha \in [0, 1]$,
	\begin{equation}
		\label{eq:probabilistic_calibration}
		F_Z(\alpha) \defeq \text{Pr}(Z \leq \alpha) = \alpha.
	\end{equation}
	Let $U \in [0, 1]$ be a uniform random variable independent of $Z$.
	The left and right hand sides of \cref{eq:probabilistic_calibration} can be interpreted as the CDF of $Z$ and $U$, respectively, as a function of $\alpha$.
	This shows that the uniformity of the PIT is equivalent to probabilistic calibration \citep{Dawid1984-gn}.
	
	
	Since the ideal forecaster is probabilistically calibrated, we can require this property from any competent forecaster. However, probabilistic calibration, though necessary, is not sufficient for making accurate probabilistic predictions. Additionally, as discussed by \citet{Gneiting2021-vu}, probabilistic calibration primarily addresses unconditional aspects of predictive performance and is implied by more robust conditional notions of calibration, such as auto-calibration.

	\paragraph{Probabilistic calibration error.}
	
	The most common approach for evaluating probabilistic calibration is to consider distances of the form $\int_0^1 |F_Z(\alpha) - F_U(\alpha)|^p d\alpha$ where $p > 0$.
	The particular cases of $p = 1$ and $p = 2$ are known as the 1-Wasserstein distance and Cramér-von Mises distance, respectively.
	We denote the empirical CDF of the PIT as $\hat{F}_Z(\alpha) = \frac{1}{N} \sum_{i=1}^N \indicator(Z_i \leq \alpha)$ where $Z_i = F_\theta(Y_i \mid X_i)$ are PIT realizations. A common approach to assess probabilistic calibration \rev{using Monte Carlo estimation} is to evaluate it at equidistant values $\alpha_1 < \dots < \alpha_M$ as follows:
	\begin{equation}
		\label{eq:UQCE}
		\text{PCE}_p(F_\theta, \D) = \frac{1}{M} \sum_{j=1}^M \left|\alpha_j - \hat{F}_Z(\alpha_j) \right|^p.
	\end{equation}
	This metric has been previously employed in literature such as \citet{Zhao2020-ze, Zhou2021-hx} with $p = 1$, and \citet{Kuleshov2018-tb, Utpala2020-nw} with $p = 2$. It is important to note that, unlike the classical definition of the $p$-norm, we do not exponentiate $\frac{1}{p}$ in \cref{eq:UQCE} to maintain consistency with prior literature. In the subsequent sections, we focus our analysis on PCE$_1$ and use the abbreviation PCE for brevity.
	

	One limitation of scalar metrics like PCE is their inability to provide detailed information regarding calibration errors at individual quantile levels, $\alpha_1, \dots, \alpha_M$. Instead, PIT reliability diagrams offer a visual assessment of probabilistic calibration across all quantile levels by plotting the empirical CDF of the PIT $Z$. These diagrams display the right side of \cref{eq:probabilistic_calibration} against its left side, with a perfectly calibrated model represented by a diagonal line (in the asymptotic case). \cref{fig:uqce_and_rel_diags_p_values} provides examples of such reliability diagrams, which have been employed in studies by \citet{Pinson2012-ba} and \citet{Kuleshov2018-tb}.

	\section{Related Work}
	\label{sec:related_work}
	
	\begin{figure*}[h]
		\centering
		\includegraphics[width=\textwidth]{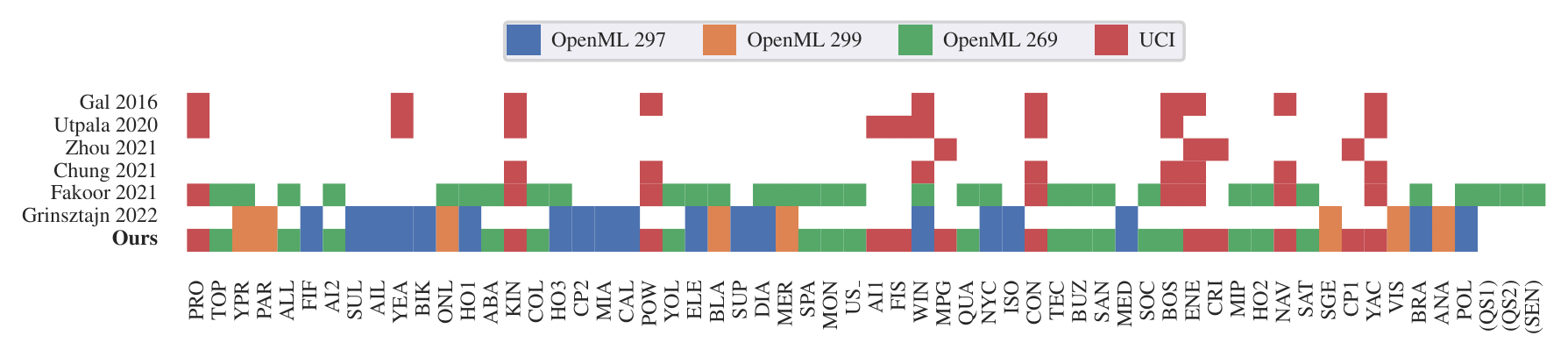}
		\caption{Multiple regression benchmark datasets with references. Datasets inside parentheses have not been considered in this study. Full dataset names are available in \cref{table:datasets}.}
		\label{fig:papers_vs_datasets}
	\end{figure*}
	
	
	
	\textbf{Post-hoc calibration approaches} involve adjusting the predictions of a trained model using a mapping learned from a separate calibration dataset. In the context of classification, temperature scaling \citep{Guo2017-ow} is a simple and effective method that adjusts predictive confidence while maintaining accuracy. For regression tasks, quantile recalibration \citep{Kuleshov2018-tb} aims to achieve probabilistic calibration. Conformal prediction \citep{Vovk2020-pg} is a general approach that provides prediction sets with a finite-sample coverage guarantee. Notable methods applied with deep learning include Conformal Quantile Regression \citep{Romano2019-kp} and Distributional Conformal Prediction \citep{Izbicki2020-ed,Chernozhukov2021-sg}. Furthermore, post-hoc approaches have also been proposed for conditional notions of calibration \citep{Song2019-bk, Kuleshov2022-pv}.

	
	\textbf{Regularization approaches} aim to improve calibration during training by incorporating regularization techniques. Some methods, proposed by \citet{Zhao2020-ze} and \citet{Feldman2021-ew}, utilize regularization to target different conditional notions of calibration based on the inputs.
	\citet{Zhou2021-hx} introduced an alternative loss function involving the simultaneous training of two neural networks, while \citet{Pearce2018-lo}, \citet{Chung2021-rh}, and \citet{Thiagarajan2020-gm} proposed objectives that allow control over the tradeoff between coverage and sharpness of prediction intervals.
	To our knowledge, the only regularization objective specifically targeting probabilistic calibration is quantile regularization \citep{Utpala2020-nw}.
	\rev{Other types of uncertainty quantification methods include ensembling \citep{Lakshminarayanan2017-zg} and Bayesian methods \citep{Jospin2022-zi}}
	
	\section{Are Neural Regression Models Probabilistically Calibrated?}
	
	\label{sec:empirical_study}
	
	
	We conduct an extensive empirical study to evaluate the probabilistic calibration of neural regression models. To this end, we calculate the \textit{probabilistic calibration error} defined in \eqref{eq:UQCE} for various state-of-the-art models across multiple benchmark datasets.
	
	
	\textbf{Benchmark datasets}. We analyze a total of 57 datasets, including 27 from the OpenML curated benchmark \citep{Grinsztajn2022-nu}, 18 from the AutoML Repository \citep{Gijsbers2019-xk}, and 12 from the UCI Machine Learning Repository \citep{Dua2017-ut}.
	
	These datasets are widely used in the evaluation of deep probabilistic models and uncertainty quantification, as evidenced by previous studies such as \citet{Fakoor2021-os, Chung2021-rh,Zhou2021-hx,Utpala2020-nw,Gal2016-xn}.
	
	\cref{fig:papers_vs_datasets} provides an overview of the utilization of these datasets in previous studies. To the best of our knowledge, our study represents the most comprehensive assessment of probabilistic calibration for neural regression models published to date.

	\textbf{Neural probabilistic regression models}. We consider three state-of-the-art neural probabilistic regression models. The first model predicts a parametric distribution, where the parameters are obtained as outputs of a hypernetwork. Previous studies have often focused on the Gaussian distribution \citep{Lakshminarayanan2017-zg, Utpala2020-nw, Zhao2020-ze}. To introduce more flexibility, we consider a mixture of $K$ Gaussian distributions. Given an input $x \in \mc{X}$, the hypernetwork parametrizes the means $\mu_k(x)$, standard deviations $\sigma_k(x)$, and weights $w_k(x)$ for each component $k = 1, ..., K$. \rev{To ensure positive standard deviations and that the mixture weights form a discrete probability distribution, we use the Softplus and Softmax activations, respectively.} We have two variants of this model depending on the scoring rule used for training: the \rev{negative log-likelihood (NLL)} or the \rev{continuous ranked probability score (CRPS)}. These models are denoted as MIX-NLL and MIX-CRPS, respectively. It is worth noting that the CRPS of a mixture of Gaussians has a closed-form expression \citep{Grimit2006-dg}.
	
	The second model predicts quantiles of the distribution \citep{Tagasovska2019-ts, Chung2021-rh, Feldman2021-ew}. Specifically, given an input $x \in \mc{X}$ and a quantile level $\alpha \in [0, 1]$, the model outputs a quantile $Q_\theta(\alpha \mid x)$. The full quantile function can be obtained by evaluating the model at multiple quantile levels. The model is trained by minimizing the quantile score at multiple levels, which is asymptotically equivalent to minimizing the CRPS \citep{Bracher2021-ut}. We denote this model as SQR-CRPS\rev{, where SQR stands for simultaneous quantile regression \citep{Tagasovska2019-ts}}.
	
    \begin{figure*}
        \centering
        \includegraphics[width=\textwidth]{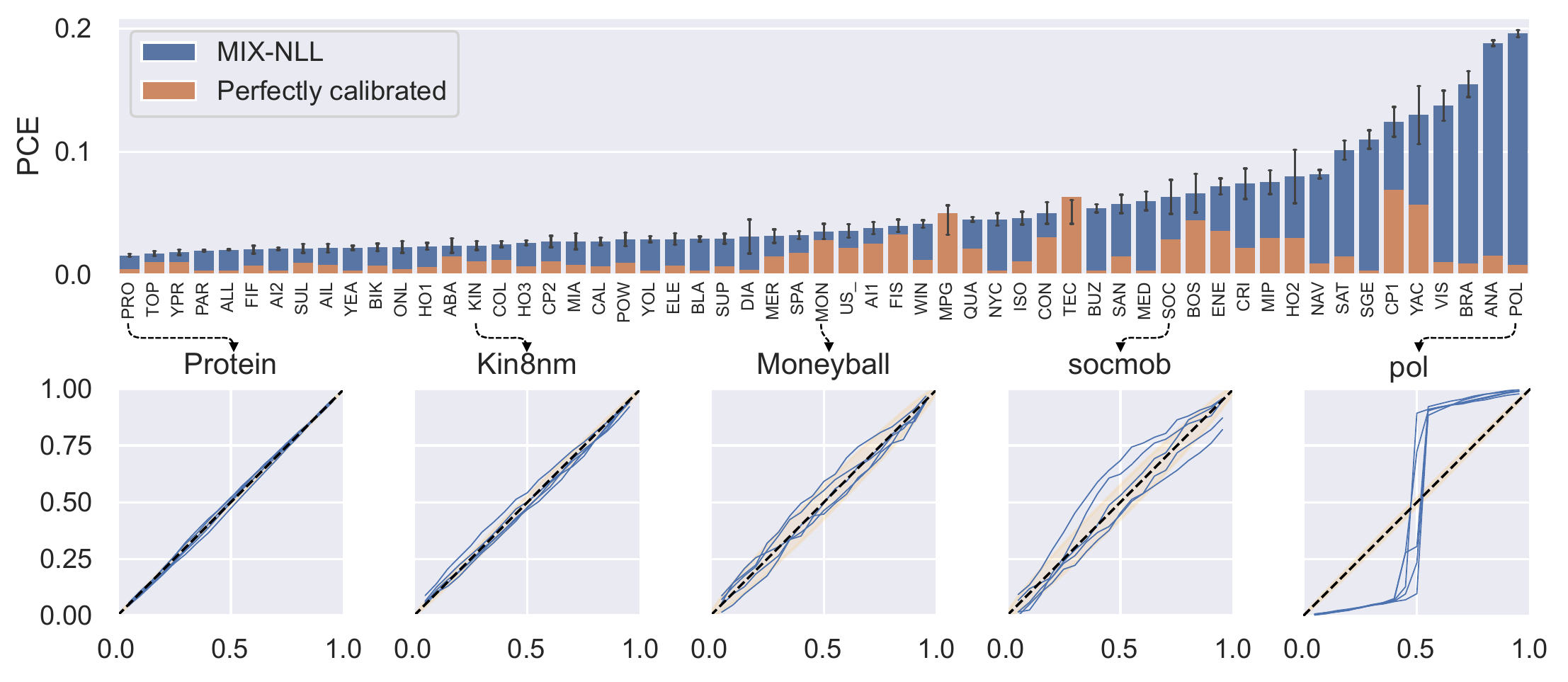}
        \caption{The top row shows the PCE for different datasets with one standard error (error bar). The bottom row gives examples of PIT reliability diagrams for five datasets.
        }
        \label{fig:uqce_and_rel_diags_p_values}
    \end{figure*}
	
	
	\textbf{Experimental setup}. We adopt the large-sized regime introduced by \citet{Grinsztajn2022-nu}, which involves truncating the datasets to a maximum of 50,000 examples. Among the 57 datasets, the number of examples ranges from 135 to 50,000, and the number of features ranges from 3 to 3,611\footnote{Please refer to Appendix \ref{sec:appendix_datasets}, specifically \cref{table:datasets}, for a detailed summary of each dataset.}. Each of the 57 datasets is divided into four sets: training (65\%), validation (10\%), calibration (15\%), and test (10\%). We normalize the input $X$ and target $Y$ using the mean and standard deviation from the training split. The final predictions are then transformed back to the original scale. For our neural network models, we employ the same fully-connected architecture as previous studies conducted by \citet{Kuleshov2018-tb}, \citet{Chung2021-rh}, and \citet{Fakoor2021-os}. Further details regarding the model hyperparameters can be found in \cref{sec:hparams}.

	\textbf{Results}. In \cref{fig:uqce_and_rel_diags_p_values}, the first row displays the PCE (averaged over five random train-validation-test splits) for MIX-NLL in blue on each of the 57 datasets. For comparison, the PCE of a perfectly calibrated model, i.e. with uniformly distributed PITs, computed using $5 \times 10^4$ simulated values is shown in orange. The second row presents reliability diagrams for five datasets, with 90\% consistency bands as in \citet{Gneiting2023-at}. Similar information is provided for MIX-CRPS and SQR-CRPS in Figures \ref{fig:pce_and_rel_diags/vanilla_wis} and \ref{fig:pce_and_rel_diags/vanilla_crps} in \cref{sec:calibration_vanilla}, respectively. Additionally, reliability diagrams for all datasets can be found in Figure \ref{fig:reliability_base_loss} in \cref{sec:rel_diags}.
	
	The analysis reveals that the (average) PCE is generally high across many datasets, although there are significant variations between datasets. To test the statistical significance of these results, $10^4$ samples were generated from the sampling distribution of the average PCE under the null hypothesis of probabilistic calibration. The resulting sampling distribution for all datasets is presented in \cref{sec:distribution_test_statistic}.

	By computing the p-value associated with a one-sided test in the upper tail of the distribution (as illustrated in \cref{sec:distribution_test_statistic}), it was observed that most datasets have a p-value of zero. This indicates that the average PCE obtained for the considered model is higher than all the simulated average PCEs of the probabilistically calibrated model. Applying a threshold of 0.01 and a Holm correction for the 57 hypothesis tests, the null hypothesis is rejected for 11 datasets out of the 57.
	
	Overall, the results indicate that the neural models considered in this study are generally probabilistically miscalibrated on a significant number of benchmark tabular datasets. In \cref{sec:comparative_study}, we will further explore how calibration methods can substantially improve the PCE of neural models.

	
	\section{Calibration Methods}
	\label{sec:methods}
	
	We begin by discussing the three main approaches to calibration: quantile recalibration, conformal prediction, and regularization-based calibration. Following that, we introduce two novel variants of regularization-based calibration.
	
	Quantile recalibration and conformal prediction are post-hoc methods, meaning they are applied after model training. These approaches utilize a separate calibration dataset $\D' = \Set{ (X'_i, Y'_i) }_{i=1}^{N'}$, where $(X'_i, Y'_i) \overset{\text{i.i.d.}}{\sim} P_{X, Y}$. On the other hand, regularization-based calibration operates directly during training and relies solely on the training data $\D$.

	
	
	\subsection{Quantile Recalibration}
	\label{sec:quantile_recalibration}
	
	

	
	Quantile recalibration aims to transform a potentially miscalibrated CDF $F_\theta$ into a probabilistically calibrated CDF $F'_\theta = F_Z \circ F_\theta$, using the calibration map $F_Z$ which represents the CDF of the PITs for $F_\theta$. For a given quantile level $\alpha \in [0, 1]$, the recalibrated CDF $F'_\theta$ satisfies:
	\begin{align}
		\text{Pr}(F'_\theta(Y \mid X) \leq \alpha)
		& = \text{Pr}(F_\theta(Y \mid X) \leq Q_Z(\alpha)) \\
		& = F_Z(Q_Z(\alpha)) \\
		& = \alpha.
	\end{align}
	In practice, $F_Z$ is not directly available and needs to be estimated from data. \citet{Kuleshov2018-tb} proposed estimating it using isotonic regression, while \citet{Utpala2020-nw} showed that computing the empirical CDF is an equivalent and simpler method. Specifically, given a set of PIT values $Z'_i = F_\theta(Y'_i \mid X'_i), i=1,\dots,N'$, the calibration map $\phi^\text{EMP}$ is computed as:
	\begin{equation}
		\label{eq:ecdf_pit}
		\phi^\text{EMP}(\alpha; \Set{Z'_i}_{i=1}^{N'}) = \frac{1}{N'} \sum_{i=1}^{N'} \indicator(Z'_i \leq \alpha),
	\end{equation}
	where $\alpha \in [0, 1]$.
	
	Similarly to \citet{Utpala2020-nw}, we also consider a linear calibration map $\phi^\text{LIN}$, which is continuous, and corresponds to a linear interpolation between the points $\Set{(0, 0), (Z'_{(1)}, \nicefrac{1}{N' + 1}), \dots, (Z'_{(N')}, \nicefrac{N'}{N' + 1}), (1, 1)}$, where $Z'_{(k)}$ is the $k$th order statistic of $Z'_1, \dots, Z'_{N'}$.
	
	In addition, we propose a calibration map based on kernel density estimation (KDE), denoted as $\phi^\text{KDE}$. This calibration map offers the advantage of being differentiable and can lead to improved NLL performance. The key idea is to use a relaxed approximation of the indicator function, which allows us to make the PIT CDF \cref{eq:ecdf_pit} differentiable. Specifically, we compute
	\[
	\indicator_\tau(a \leq b) = \sigma(\tau (b - a)) \approx \indicator(a \leq b),
	\]
	where $\tau > 0$ is an hyperparameter and $\sigma(x) = \frac{1}{1 + e^{-x}}$ denotes the sigmoid function. The resulting smoothed empirical CDF is given by
	\begin{equation}
		\label{eq:phi_kde}
		\phi^\text{KDE}(\alpha; \Set{Z'_i}_{i=1}^{N'}) = \frac{1}{N'} \sum_{i=1}^{N'} \indicator_\tau(Z'_i \leq \alpha).
	\end{equation}
	\rev{This corresponds to estimating the CDF $F_Z$ using KDE based on $N'$ realizations of $Z$ ($\Set{Z'_i}_{i=1}^{N'}$). Since $\sigma$ is the CDF of the logistic distribution, we use the PDF of the logistic distribution as the kernel in the KDE.} 	\cref{algo:quantile_recalibration} summarises this method.


	
	\begin{algorithm}[H]
		\caption{Quantile recalibration}
		\label{algo:quantile_recalibration}
		\begin{algorithmic}
			\STATE {\bfseries Input:} Predictive CDF $F_\theta$ and $\D' = \Set{(X'_i, Y'_i)}_{i=1}^{N'}$.
			\STATE Compute $Z'_i = F_\theta(Y'_i \mid X'_i) \quad (i = 1, \dots, N')$
			\STATE Compute the calibration map $\phi$, either $\phi^\text{EMP}$, $\phi^\text{LIN}$, or $\phi^\text{KDE}$
			\STATE {\bfseries Return:} Recalibrated CDF $F'_\theta = \phi \circ F_\theta$.
		\end{algorithmic}
	\end{algorithm}
	
	
	\subsection{Conformal Prediction}
	\label{sec:conformal_prediction}

	Let us assume the realizations of our calibration dataset $\mathcal{D}'$ are drawn exchangeably from $P_{X, Y}$\footnote{This is implied by the common i.i.d. assumption.}. Given a predictive model $M_{\theta}$ and a coverage level $\alpha \in [0, 1]$, (inductive) conformal prediction allows us to construct a prediction set $C_\alpha(X) \subseteq \mathcal{Y}$ for any input $X$, satisfying the property:
	\begin{align}
		\text{Pr}(Y \in C_\alpha(X))
		& = \frac{\lceil(N'+1)\alpha\rceil}{N'+1} \label{eq:conformal_equality} \\
		& \approx \alpha. \label{eq:conformal_lower_bound}
	\end{align}
	Conformal prediction achieves this by utilizing a conformity score $s_\theta(Y \mid X)$, which intuitively quantifies the similarity between new samples and previously observed samples. When the conformity score increases with $Y$, an interval $C_\alpha(X) = (-\infty, s_\theta^{-1}(\alpha \mid X)]$ can be constructed, ensuring the conformal guarantee \eqref{eq:conformal_equality} at level $\alpha$.

	
	Let $Q'_\theta(\alpha \mid X) = s_\theta^{-1}(\alpha \mid X)$ represent the (revised) model obtained through conformal prediction from $Q_\theta(\alpha \mid X)$. Under the assumption that $Q'_\theta$ is continuous and strictly increasing, the conformal guarantee implies that $\text{Pr}(Y \leq Q'_\theta(\alpha \mid X)) \approx \alpha$, which indicates approximate probabilistic calibration at level $\alpha$.
	
	Conformalized Quantile Regression \citep{Romano2019-kp} is an example of a conformal procedure, where the conformity score is defined as $s_\theta(Y \mid X) = Y - Q_\theta(\alpha \mid X)$, representing the quantile residual. Another example is Distributional Conformal Prediction \citep{Izbicki2020-ed,Chernozhukov2021-sg}, which employs the conformity score $s_\theta(Y \mid X) = F_\theta(Y \mid X)$, referring to the PIT. \cref{algo:CP} provides a summary of how to compute calibrated quantiles using inductive conformal prediction.
	
	\begin{algorithm}[h]
	\caption{Calibrated quantiles with conformal prediction}
	\label{algo:CP}
	\begin{algorithmic}
			\STATE {\bfseries Input:} Trained model $M_\theta$, $\D' = \Set{(X'_i, Y'_i)}_{i=1}^{N'}$, strictly increasing conformity score $s$, quantile level $\alpha \in [0, 1]$, input $X$.
			
			\STATE Compute $S_i = s_\theta(Y'_i \mid X'_i) \quad (i = 1, \dots, N')$\;
			\STATE Compute $\hat{q} = S_{(\ceil{(N'+1)\alpha})}$ where $S_{(k)}$ denote the $k$th smallest value among $\Set{S_1, \dots, S_{N'}, +\infty}$ \;
			\STATE {\bfseries Return:} Calibrated quantile $Q'_\theta(\alpha \mid X) = s_\theta^{-1}(\hat{q} \mid X)$
		\end{algorithmic}
	\end{algorithm}
	
	\subsection{Regularization-based Calibration}
	\label{sec:regularization}
	
	%
	
	
	
	Regularization-based calibration methods aim to enhance model calibration by incorporating a regularization term into the training objective. Although widely used in classification, there are relatively fewer methods specifically designed for regression problems. In this section, we discuss two approaches: quantile regularization \citep{Utpala2020-nw} and the truncation method \citep{Chung2021-rh}. The main steps of regularization-based calibration are summarized in Algorithm \ref{algo:regul}.

	
	
	
	\begin{algorithm}[H]
		\caption{Regularization-based calibration}
		\label{algo:regul}
		\begin{algorithmic}
			\STATE {\bfseries Input:}  Model $M_{\theta}$, calibration regularizer $\mc{R}(\theta)$ and tuning parameter $\lambda \geq 0$.
			\STATE Compute $\theta^{*} = \argmin_{\theta \in \Theta} \mc{L}^{'}(\theta; \D)$ where
			\STATE $\mc{L}^{'}(\theta; \D) = \nicefrac{1}{N} \sum_{i=1}^N \mc{L}(M_\theta(\cdot \mid X_i), Y_i) + \lambda \mc{R}(\theta; \D)$
			\STATE {\bfseries Return:} Regularized model $M_{\theta^{*}}$
		\end{algorithmic}
	\end{algorithm}
	
	\subsubsection{Quantile Regularization}
	\label{sec:entropy_based}
	
	The regularizer proposed by \citet{Utpala2020-nw} aims to measure the deviation of the PIT variable $Z$ from a uniform distribution, which is characteristic of a probabilistically calibrated model. This regularization penalty encourages the selection of calibrated models during training.
	
	The authors observed that the KL divergence between $Z$ and a uniform random variable is equivalent to the negative differential entropy of $Z$, denoted as $H(Z)$. To approximate $H(Z)$, they employed sample-spacing entropy estimation \citep{Vasicek1976-fa}, resulting in the following regularizer:
	\begin{align}
		& \mc{R}_{\text{QR}}(\theta; \mc{D})  \\
		& = \frac{1}{N-k} \sum_{i=1}^{N-k} \log \left[ \frac{N+1}{k} (Z_{(i+k)} - Z_{(i)}) \right] \\
		& \approx H(Z),
	\end{align}
	where $k$ is a hyperparameter satisfying $1 \leq k \leq N$, and $Z_{(i)}$ represents the $i$th order statistic of $Z$.
	
	To ensure differentiability during optimization, the authors employed a differentiable relaxation technique called NeuralSort \citep{Grover2019-iw}, as sorting is a non-differentiable operation.

	
	
	
	%
	
	\subsubsection{Truncation-based Calibration}
	

	The regularization approach introduced by \citet{Chung2021-rh}, that we denote Trunc, involves truncating the predictive distribution based on the current level of calibration.
	
	Given a quantile model $Q_\theta$, let $\hat{F}_Z(\alpha) = \frac{1}{N} \sum_{i=1}^N \indicator(Y_i \leq Q_\theta(\alpha \mid X_i))$ be the estimated PIT CDF evaluated at $\alpha$ and $\rho(x, y) = (y - x) \indicator(x < y)$. The regularization objective for level $\alpha$ is defined as follows:
	\begin{align}
		& \mc{R}_\text{Trunc}(\theta; \mc{D}, \alpha) \\
		& =
		\begin{cases}
			\frac{1}{N} \sum_{i=1}^N \rho(Q_\theta(\alpha \mid X_i), Y_i) & \text{if $\hat{F}_Z(\alpha) < \alpha$} \\
			\frac{1}{N} \sum_{i=1}^N \rho(Y_i, Q_\theta(\alpha \mid X_i)) & \text{otherwise}
		\end{cases}
	\end{align}
	
	This regularization objective adjusts $\hat{F}_Z(\alpha)$ to match $\alpha$ by increasing it when $\hat{F}_Z(\alpha) < \alpha$, and vice versa. The final regularization objective is computed by averaging $\mc{R}_\text{Trunc}(\theta; \mc{D}, \alpha)$ over multiple quantile levels $\Set{\alpha_j}_{j=1}^M$:
	\begin{equation}
		\mc{R}_\text{Trunc}(\theta; \mc{D}) = \frac{1}{M} \sum_{j=1}^M \mc{R}_\text{trunc}(\theta; \mc{D}, \alpha_j).
	\end{equation}
	

	It is worth noting that \citet{Chung2021-rh} combine the previous regularization objective with a sharpness objective that penalizes the width between the quantile predictions, given by $\frac{1}{M} \sum_{j=1}^M \frac{1}{N} \sum_{i=1}^N |Q_\theta(\alpha_j \mid X_i) - Q_\theta(1 - \alpha_j \mid X_i)|$. Instead, we combine it with a strictly proper scoring rule.

	
	
	
	
	

	\subsection{New Regularization-based Calibration Methods}
	\label{sec:new_regularization}
	
	Building upon the quantile calibration method discussed in Section \ref{sec:entropy_based}, we propose two new regularization objectives which compute a differentiable $\text{PCE}_p$ using alternative statistical distances.
	
	The first approach, named PCE-KDE, leverages the differentiable calibration map $\phi^\text{KDE}$ \cref{eq:phi_kde} based on KDE. Given a set of quantile levels $\Set{\alpha_j}_{j=1}^M$, the regularization objective is given by
	\begin{align}
		\mc{R}_\text{PCE-KDE}(\theta; \mc{D}) = \frac{1}{M} \sum_{j=1}^M \left|\alpha_j - \phi^\text{KDE}(\alpha_j; \Set{Z_i}_{i=1}^{N}) \right|^p, 
	\end{align}
	where $p > 0$. Note that $\mc{R}_\text{PCE-KDE}$ reduces to $\text{PCE}_p$ in \eqref{eq:UQCE} when $\tau$ in \eqref{eq:phi_kde} goes to $\infty$.

	


	The second approach considers distances of the form $\int_0^1 |Q_Z(\alpha) - Q_U(\alpha)|^p d\alpha$, where $Q_Z$ and $Q_U$ denote the quantile functions of the true and uniform distributions, respectively. When $p = 1$, this distance reduces to the 1-Wasserstein distance, equivalent to $\int_0^1 |F_Z(\alpha) - F_U(\alpha)| d\alpha$, which aligns with PCE (see \cref{th:integral_abs_diff_cdf_quantile} in  \cref{sec:integral_abs_diff_cdf_quantile}).

	
	By exploiting the fact that $\E{F_Z(Z_{(i)})} = \nicefrac{i}{N+1}$, we approximate $Q_Z(\nicefrac{i}{N + 1})$ using the $i$-th order statistic $Z_{(i)}$. The regularization objective is given by
	\begin{equation}
		\mc{R}_\text{PCE-Sort}(\theta; \mc{D}) = \frac{1}{N} \sum_{i=1}^{N} \left\lvert Z_{(i)} - \frac{i}{N + 1} \right\rvert^p,
	\end{equation}
	where $p > 0$. Differentiable relaxations to sorting, such as those proposed by \citet{Blondel2020-sf} and \citet{Cuturi2019-dv}, can be employed to obtain the order statistics.

	
	\section{A Comparative Study of Probabilistic Calibration Methods}
	\label{sec:comparative_study}
	
	In continuation of the empirical study described in  \cref{sec:empirical_study}, we now proceed to evaluate the performance of the probabilistic calibration methods outlined in the previous section. Specifically, we apply eight distinct calibration methods to the three neural regression models introduced in \cref{sec:empirical_study}. These methods are evenly divided into two categories: post-hoc methods and regularization-based methods.
	
	To assess the effectiveness of these calibration methods, we employ four different evaluation metrics. The evaluation is conducted on a set of 57 datasets, utilizing the same experimental setup detailed in  \cref{sec:empirical_study}. To ensure a fair and consistent comparison, all the methods have been implemented within a unified codebase\footnote{The code can be accessed at the following GitHub repository: \url{https://github.com/Vekteur/probabilistic-calibration-study}}.


	\subsection{Experimental Setup}
	\label{sec:experiments}

	%

	
	\paragraph{Base probabilistic models and calibration methods.}
	
	We consider the three probabilistic models presented in \cref{sec:empirical_study}, namely \texttt{MIX-NLL}, \texttt{MIX-CRPS}, and \texttt{SQR-CRPS}. For the \texttt{MIX} models, when applying quantile recalibration, we transform the CDF using the empirical CDF estimator (\texttt{Rec-EMP}), the linear estimator (\texttt{Rec-LIN}), or the KDE estimator (\texttt{Rec-KDE}). For \texttt{SQR-CRPS}, we transform multiple quantiles using conformalized quantile regression (\texttt{CQR}). For the three models, we consider the four regularization objectives presented in \cref{sec:regularization,sec:new_regularization} (with $p = 1$), namely $\mc{R}_\text{PCE-KDE}$ (\texttt{PCE-KDE}), $\mc{R}_\text{PCE-Sort}$ (\texttt{PCE-Sort}), $\mc{R}_\text{QR}$ (\texttt{QR}), and $\mc{R}_\text{Trunc}$ (\texttt{Trunc}). \texttt{PCE-Sort} is only shown in the Appendix because it performs similarly to \texttt{PCE-KDE}.
	

	
	\paragraph{Metrics.}
	
	We measure the accuracy of the probabilistic predictions using NLL and CRPS. For the SQR model, \rev{we estimate CRPS by averaging the quantile score at 64 equidistant quantile levels.}
	Probabilistic calibration is measured using PCE, defined in \eqref{eq:probabilistic_calibration}. Finally, we measure sharpness using the mean standard deviation of the predictive distributions, denoted by STD.
	
	\paragraph{Hyperparameters.}
	
	In our experiments, \texttt{MIX-NLL} and \texttt{MIX-CRPS} output a mixture of 3 Gaussians, and \texttt{SQR-CRPS} outputs 64 quantiles. We justify the choice of these hyperparameters in \cref{sec:hparams}. The hyperparameter $\tau$ of \texttt{Rec-KDE} and \texttt{PCE-KDE} is fixed at 100, which was found to perform well empirically. For regularization methods, an important hyperparameter is the regularization factor $\lambda$. As previously observed in classification \citep{Karandikar2021-ml}, we found that higher values of $\lambda$ tend to improve calibration but worsen NLL, CRPS, and STD. \citet{Karandikar2021-ml} proposed to limit the loss in accuracy by a maximum of 1\%. We adopt a similar strategy by selecting $\lambda$ which minimizes PCE with a maximum increase in CRPS of 10\% in the validation set. For each dataset, we select $\lambda$ in the set $\{0, 0.01, 0.05, 0.2, 1, 5\}$, which corresponds to various degrees of calibration regularization.
	
	\paragraph{Comparison of multiple models over many datasets.}
	
	As in \citet{Karandikar2021-ml}, and since NLL, CRPS and STD have different scales across datasets, we report Cohen's d, which is a standardized effect size comparing the mean of one method (over 5 runs, in our case) against a baseline. Values of $-0.8$ and $-2$ are considered large and huge, respectively. Due to the heterogeneity of the datasets that we consider, the performance of our models can vary widely across datasets. To visualize the results, we show the distribution of Cohen's d using letter-value plots \citep{Hofmann2011-rg}, which indicate the quantiles at levels $\nicefrac{1}{8}$, $\nicefrac{1}{4}$, $\nicefrac{1}{2}$, $\nicefrac{3}{4}$ and $\nicefrac{7}{8}$, as well as outliers. A median value below zero indicates that the model improved the metric on more than half the datasets.
	
	In order to assess whether significant differences exist between different methods, we follow the recommendations of \citet{Ismail_Fawaz2019-od}, which are based on \cite{Demsar2006-ed}. First, we test for a significant difference among model performances using the Friedman test \citep{Friedman1940-vi}. Then, we use the pairwise post-hoc analysis recommended by \citet{Benavoli2016-mh} using a Wilcoxon signed-rank test \citep{Wilcoxon1945-cp} with Holm's alpha correction \citep{Holm1979-rk}. The results of this procedure are shown using a critical difference diagram \citep{Demsar2006-ed}. The lower the rank (further to the right), the better performance of a model. A thick horizontal line shows a group of models whose performance is not significantly different, with a significance level of 0.05.
	
	\subsection{Results}
	
	Figure \ref{fig:posthoc_and_regul_vs_baseline/test_calib_l1} shows the letter-values plots for the Cohen’s d of PCE (top panel) as well as the associated critical diagram (bottom panel), for all methods and datasets. The reference model is \texttt{MIX-NLL}. The results with other models as reference are available in \cref{sec:comparison_per_base_model}.  Blue, green, and red colors are used for the post-hoc methods, the regularization-based methods, and the base models, respectively. The same information is given in Figures \ref{fig:posthoc_and_regul_vs_baseline/test_wis} and \ref{fig:posthoc_and_regul_vs_baseline/test_nll} for the CRPS and the NLL, respectively.
	
	\paragraph{Comparison of PCE.} As expected, \cref{fig:posthoc_and_regul_vs_baseline/test_calib_l1} shows that the PCE of calibration methods is improved compared to the base models. Furthermore, independently of the base model, we can see that post-hoc methods achieve significantly better PCE than regularization methods. When comparing \texttt{PCE-KDE} with \texttt{QR}, we can see that there is a significantly larger decrease in PCE with the \texttt{MIX-CRPS} base model compared to \texttt{MIX-NLL}. Finally, both \texttt{PCE-KDE} and \texttt{Trunc} decrease PCE for \texttt{SQR-CRPS}, without a significant difference between them.

	\begin{figure}
		\centering
		\subcaptionbox{
			Cohen's d of PCE with respect to the \texttt{MIX-NLL} model
			\label{fig:posthoc_and_regul_vs_baseline/test_calib_l1/main_metrics/cohen_d_boxplot}
		}{
			\includegraphics[width=\linewidth]{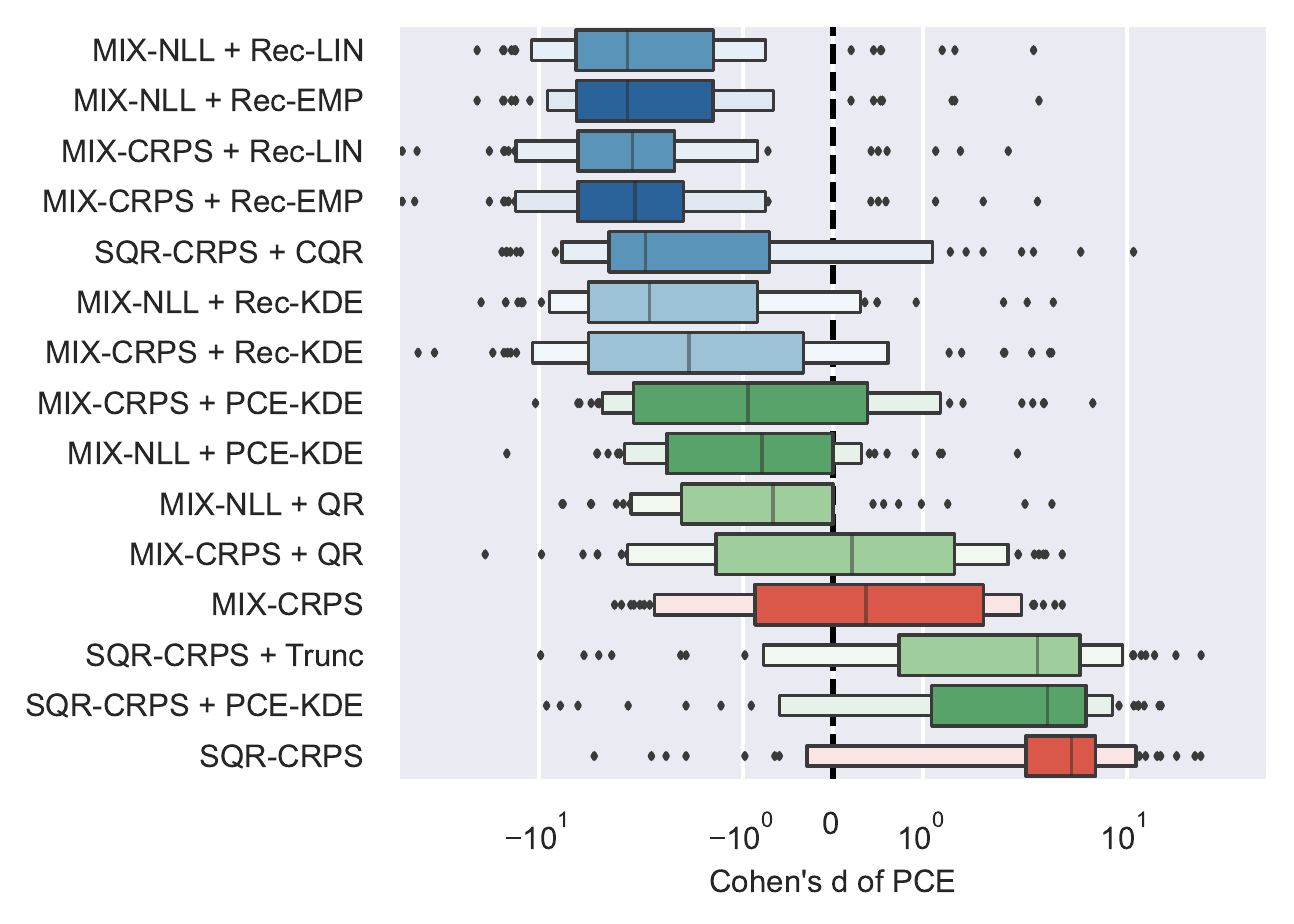}
			\vspace{-0.3cm}
		}
		\subcaptionbox{
			Critical difference diagram
			\label{fig:posthoc_and_regul_vs_baseline/cd_diagrams/test_calib_l1}
		}{
			\includegraphics[width=\linewidth]{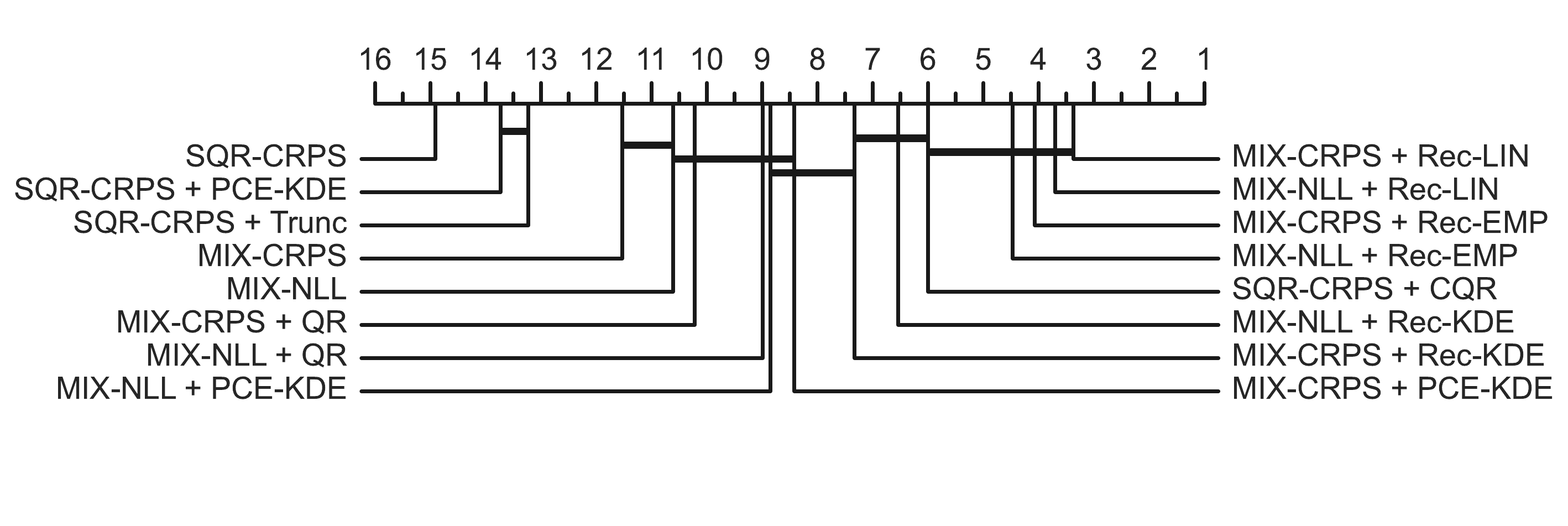}
			\vspace{-0.6cm}
		}
		\vspace{-0.2cm}
		\caption{Comparison of PCE with multiple base losses and calibration methods.}
		\label{fig:posthoc_and_regul_vs_baseline/test_calib_l1}
	\end{figure}
	
	\paragraph{Comparison of CRPS.} While post-hoc methods outperform regularization methods in terms of PCE, \cref{fig:posthoc_and_regul_vs_baseline/test_wis} shows they have a higher CRPS (except for the \texttt{SQR} base model). This can be explained by the fact that regularization methods prevent the CRPS from increasing exceedingly due to the selection criterion for $\lambda$.
	
	\paragraph{Comparison of NLL.}
	
	\cref{fig:posthoc_and_regul_vs_baseline/test_nll} shows the importance of the calibration map. In fact, quantile recalibration with a linear map significantly increases the NLL, while smooth interpolation decreases PCE without a large increase in NLL. Note that we only consider \texttt{MIX} models since we cannot compute the NLL for \texttt{SQR}.

	\begin{figure}
		\centering
		\subcaptionbox{
			Cohen's d of CRPS with respect to the \texttt{MIX-NLL} model
			\label{fig:posthoc_and_regul_vs_baseline/test_wis/main_metrics/cohen_d_boxplot}
		}{
			\includegraphics[width=\linewidth]{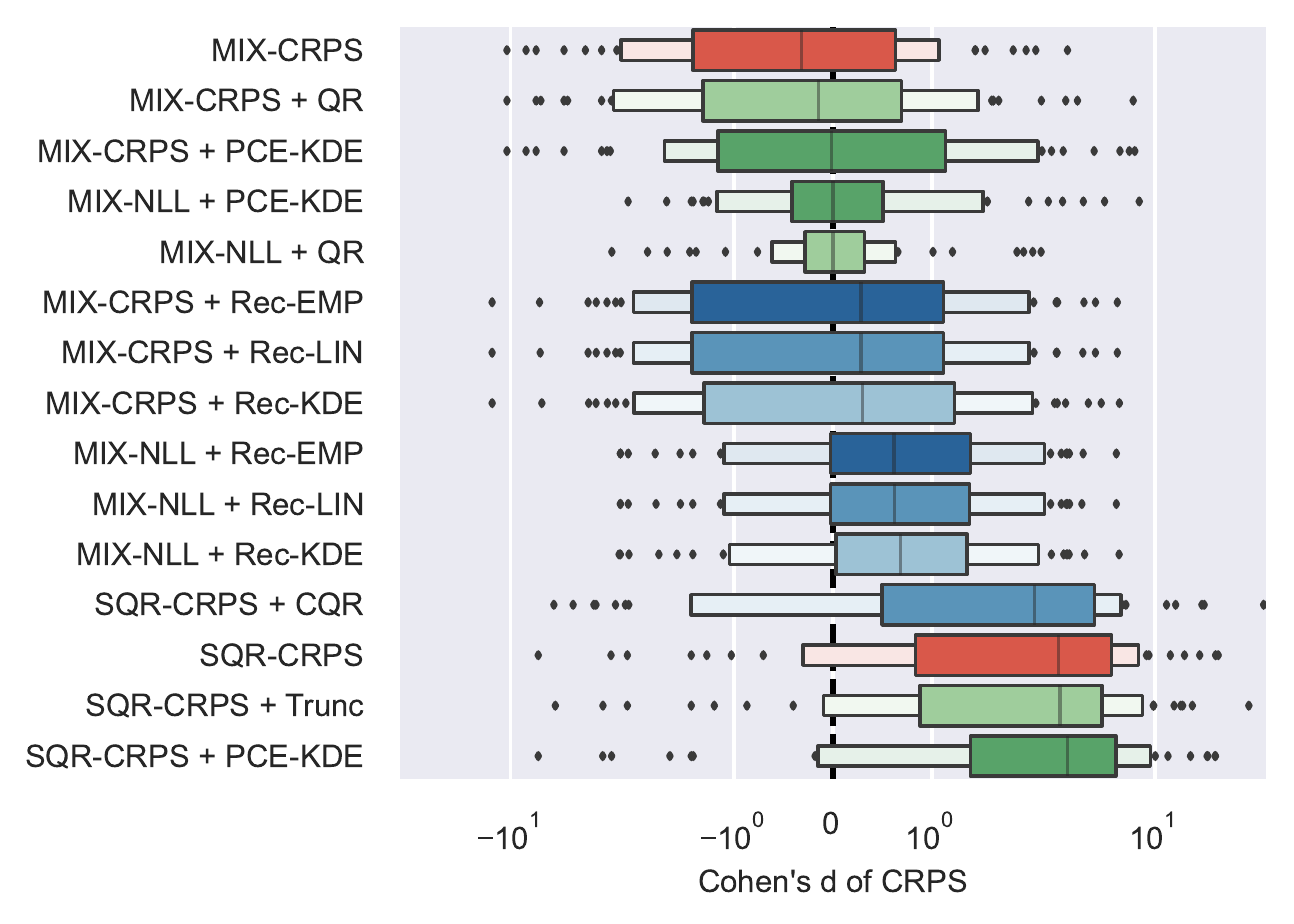}
			\vspace{-0.3cm}
		}
		\subcaptionbox{
			Critical difference diagram
			\label{fig:posthoc_and_regul_vs_baseline/cd_diagrams/test_wis}
		}{
			\includegraphics[width=\linewidth]{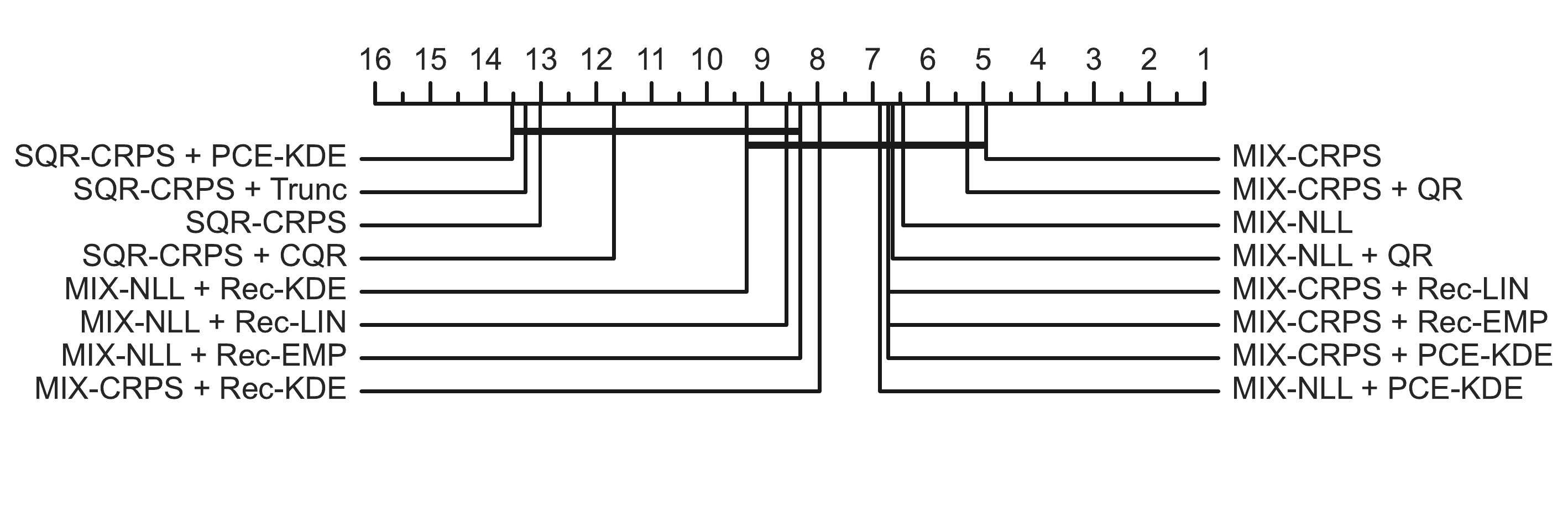}
			\vspace{-0.6cm}
		}
		\vspace{-0.2cm}
		\caption{Comparison of CRPS with multiple base losses and calibration methods.}
		\label{fig:posthoc_and_regul_vs_baseline/test_wis}
	\end{figure}

	\paragraph{On the choice of a calibration method.}
	
	If probabilistic calibration is critical to the application, our experiments suggest that post-hoc methods such as quantile recalibration and conformal prediction should be preferred. However, when we also want to control the CRPS or the NLL, regularization methods can offer a better trade-off in terms of calibration and sharpness. In fact, as shown in \cref{fig:posthoc_or_regul_vs_vanilla/nll} in \cref{sec:comparison_per_base_model}, when the base model is \texttt{MIX-NLL}, all regularization methods provide a significant improvement in probabilistic calibration without deteriorating the CRPS, NLL or STD. For the \texttt{MIX-CRPS} model, \cref{fig:posthoc_or_regul_vs_vanilla/crps} shows that \texttt{QR} has limited impact on CRPS and NLL, while providing better calibration.
	For the \texttt{SQR-CRPS} base model, \cref{fig:posthoc_or_regul_vs_vanilla/wis} shows that the \texttt{SQR-CRPS + CQR} conformal method significantly outperforms the \texttt{Trunc} and \texttt{PCE-KDE} regularization methods both in terms of PCE and CRPS.
	\rev{Overall, \cref{sec:comparison_per_base_model} suggests that \texttt{MIX-NLL + PCE-KDE}, \texttt{MIX-CRPS + QR} and \texttt{SQR-CRPS + CQR} are good choices for practitioners aiming to improve PCE without significantly impacting other aspects of the conditional distribution.}
	Finally, since both regularization and post-hoc methods are able to improve calibration, we investigate whether a combination of these two methods can lead to better performance. \cref{fig:posthoc_and_regul_vs_posthoc} in \cref{sec:combining_regul_and_post_hoc} shows that such a combination does not significantly improve probabilistic calibration, with an increase in CRPS and NLL.
	\rev{This indicates that practitioners should exercise caution when applying regularization to a model that is already well-calibrated.}
	
	\begin{figure}
		\centering
		\subcaptionbox{
			Cohen's d of NLL with respect to the \texttt{MIX-NLL} model
			\label{fig:posthoc_and_regul_vs_baseline/test_nll/main_metrics/cohen_d_boxplot}
		}{
			\includegraphics[width=\linewidth]{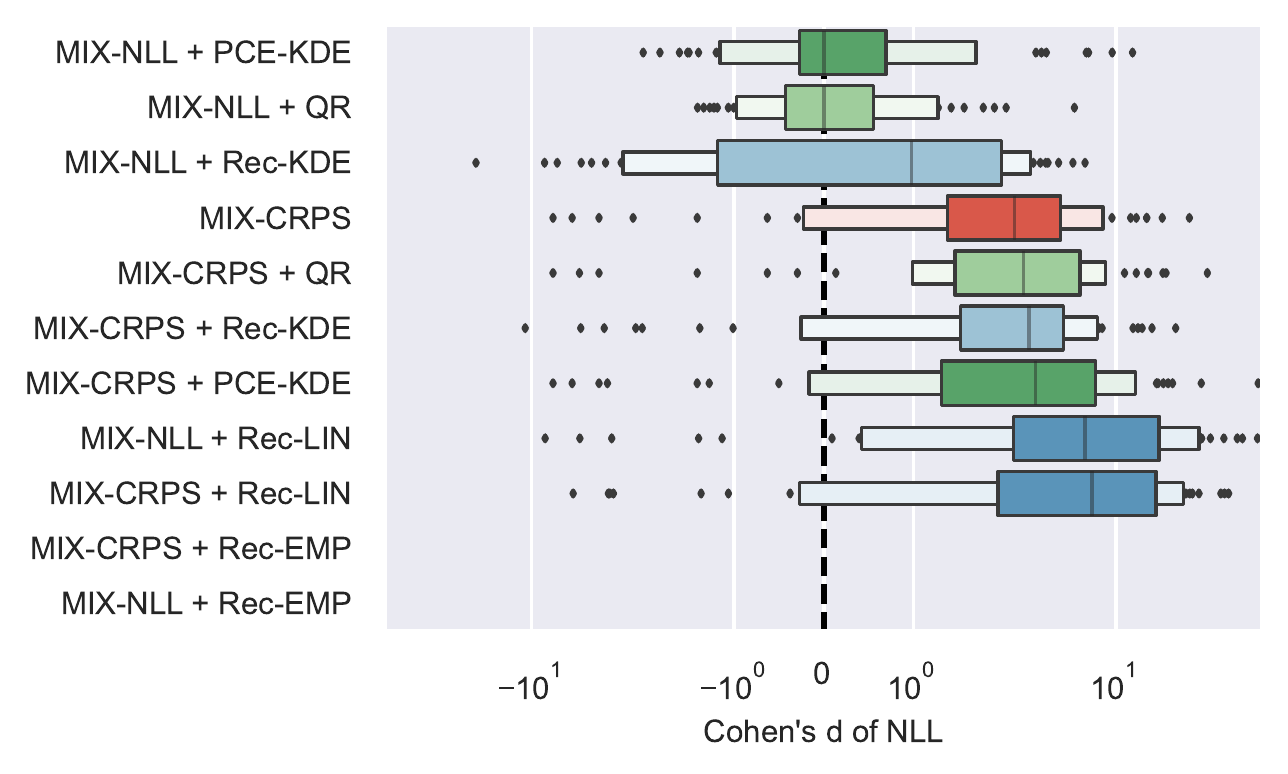}
			\vspace{-0.3cm}
		}
		\subcaptionbox{
			Critical difference diagram
			\label{fig:posthoc_and_regul_vs_baseline/cd_diagrams/test_nll}
		}{
			\includegraphics[width=\linewidth]{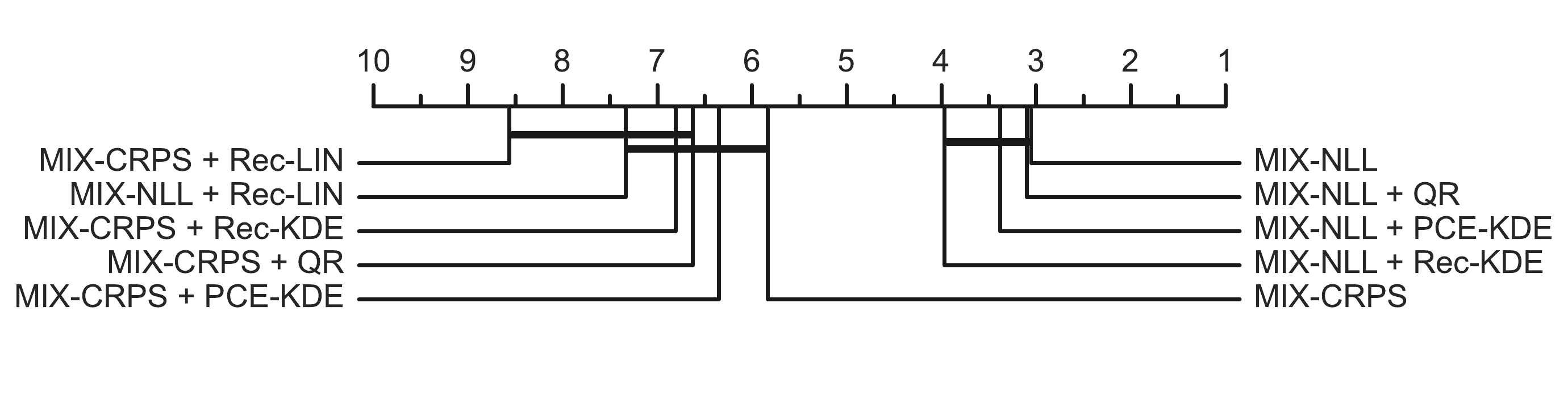}
			\vspace{-0.6cm}
		}
		\vspace{-0.2cm}
		\caption{Comparison of NLL with multiple base losses and calibration methods.}
		\label{fig:posthoc_and_regul_vs_baseline/test_nll}
	\end{figure}
	
	\subsection{Link between Quantile Recalibration and Conformal Prediction}
	
	Conformal prediction methods are well-known for their finite-sample coverage guarantee. Interestingly, a specific implementation of quantile recalibration can be considered a special case of conformal prediction. This implies that quantile recalibration can also provide a finite-sample coverage guarantee. This observation could potentially explain why both methods, conformal prediction and quantile recalibration, are effective in improving probabilistic calibration.
	
	
	\begin{theorem}
		\label{th:eq_quantile_recalibration_and_dcp}
		Quantile recalibration is equivalent to Distributional Conformal Prediction (DCP) of left intervals at each coverage level $\alpha \in [0, 1]$. The equivalence is obtained when the estimator of the calibration map is defined by a slightly different estimator than the conventional one in \eqref{eq:ecdf_pit}, namely $\rev{\phi_{\text{DCP}}}(\alpha) = \frac{1}{N'+1} \sum_{i=1}^{N'} \indicator(Z'_i \leq \alpha)$.
	\end{theorem}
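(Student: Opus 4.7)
The plan is to show that, at each level $\alpha \in [0,1]$, the left prediction interval produced by quantile recalibration with calibration map $\phi_{\text{DCP}}$ coincides exactly with the left prediction interval produced by DCP, by comparing the two endpoints explicitly as functions of the PIT values $Z'_1, \dots, Z'_{N'}$.

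First I would rewrite the recalibrated CDF. By definition $F'_\theta(\cdot \mid x) = \phi \circ F_\theta(\cdot \mid x)$, so
\[
F'_\theta(y \mid x) \leq \alpha \iff F_\theta(y \mid x) \leq \phi^{-1}(\alpha) \iff y \leq Q_\theta(\phi^{-1}(\alpha) \mid x),
\]
where $\phi^{-1}(\alpha) = \inf\{t \in [0,1] : \phi(t) \geq \alpha\}$ is the generalized inverse. Hence the left prediction interval of quantile recalibration at level $\alpha$ is $(-\infty,\, Q_\theta(\phi^{-1}(\alpha) \mid x)]$.

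Next I would apply \cref{algo:CP} with the DCP conformity score $s_\theta(Y\mid X) = F_\theta(Y\mid X)$, so that $S_i = Z'_i$. The DCP left interval at level $\alpha$ is $(-\infty,\, Q_\theta(\hat q \mid x)]$ with $\hat q = S_{(\lceil(N'+1)\alpha\rceil)}$, taken among $\{Z'_1,\dots,Z'_{N'},+\infty\}$. Given that $Q_\theta$ is common to both constructions, it suffices to prove $\phi_{\text{DCP}}^{-1}(\alpha) = \hat q$. For this I would analyze $\phi_{\text{DCP}}$ as a step function that starts at $0$, jumps by $\tfrac{1}{N'+1}$ at each order statistic $Z'_{(k)}$, and ends at $\tfrac{N'}{N'+1}$; for any $\alpha \in \bigl(\tfrac{k-1}{N'+1},\tfrac{k}{N'+1}\bigr]$ with $1 \leq k \leq N'$ the defining infimum is attained at $Z'_{(k)}$, and for $\alpha \in \bigl(\tfrac{N'}{N'+1},1\bigr]$ the set is empty so the infimum is $+\infty$. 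In all cases $\phi_{\text{DCP}}^{-1}(\alpha) = Z'_{(k)}$ with $k = \lceil(N'+1)\alpha\rceil$, which matches $\hat q$ (using the convention $Z'_{(N'+1)} = +\infty$).

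The only subtlety I expect is the boundary and tie handling: one must show that the specific normalization $\tfrac{1}{N'+1}$ (rather than $\tfrac{1}{N'}$ as in $\phi^{\text{EMP}}$) is exactly what aligns the jump locations of the step function with the ceiling $\lceil(N'+1)\alpha\rceil$ used in DCP, and one must cover the upper tail $\alpha > N'/(N'+1)$ where DCP appends $+\infty$. Once that bookkeeping is done, both methods output the same endpoint $Q_\theta(\hat q \mid x)$ at every $\alpha$, establishing the claimed equivalence.
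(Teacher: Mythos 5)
Your proposal is correct and follows essentially the same route as the paper's proof: both reduce the claim to the identity $\phi_{\text{DCP}}^{-1}(\alpha) = Z'_{(\lceil(N'+1)\alpha\rceil)}$, i.e.\ that the generalized inverse of $\phi_{\text{DCP}}$ is exactly the empirical quantile used by DCP, and then match the interval endpoints through $Q_\theta$. The only difference is that you verify the step-function bookkeeping (jump sizes $\tfrac{1}{N'+1}$, the upper tail $\alpha > \tfrac{N'}{N'+1}$ handled by the appended $+\infty$) explicitly, where the paper simply asserts it; just be careful that the clean equivalence $F'_\theta(y\mid x)\le\alpha \iff y \le Q_\theta(\phi^{-1}(\alpha)\mid x)$ can fail at jump points of the step function, so the interval is best defined directly via the recalibrated quantile function $Q'_\theta(\alpha\mid x)=\inf\{y: \phi_{\text{DCP}}(F_\theta(y\mid x))\ge\alpha\}$.
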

	
	\begin{proof}
	Given a predictive distribution $F_\theta$ learned from a training dataset $\D = \Set{(X_i, Y_i)}_{i=1}^N$ where $(X_i, Y_i) \overset{\text{i.i.d.}}{\sim}P_{X, Y}$, let $Z'_i = F_\theta(Y'_i \mid X'_i)$ represent the PIT values computed on a separate calibration dataset $\D' = \Set{ (X'_i, Y'_i) }_{i=1}^{N'}$, where $(X'_i, Y'_i) \overset{\text{i.i.d.}}{\sim} P_{X, Y}$.
	
	In the DCP approach, as outlined in \cref{algo:CP}, the conformal scores are given by the PIT values $Z'_i$. DCP first computes the $\alpha$ empirical quantile of the scores as $\hat{q} = Z'_{(\ceil{(N'+1)\alpha})}$, where $Z'_{(k)}$ represents the $k$th smallest value among $\Set{Z'_1, \dots, Z'_{N'}, +\infty}$. Then, the conformalized quantile is computed as $Q'_\theta(\alpha \mid X) = Q_\theta(\hat{q} \mid X)$, which corresponds to conformal prediction with coverage $\alpha$ for the left interval $(-\infty, Q'_\theta(\alpha \mid X)]$.
	
	Let us consider quantile recalibration with the calibration map $\phi$ in \cref{algo:quantile_recalibration} given by $\phi_{\text{DCP}}(\alpha) = \frac{1}{N'+1} \sum_{i=1}^{N'} \indicator(Z'_i \leq \alpha)$. It computes a recalibrated CDF $F'_\theta$ by composing the original CDF $F_\theta$ with $\phi_{\text{DCP}}$, yielding $F'_\theta(y \mid X) = \phi_\text{DCP}(F_\theta(y \mid X))$.
	
	We observe that $\phi_\text{DCP}$ is the CDF of a discrete random variable, with $\phi_\text{DCP}^{-1}(\alpha) = Z'_{(\ceil{(N' + 1) \alpha})}$ representing its empirical quantile function. Furthermore, the composition $\phi_\text{DCP} \circ F_\theta(\cdot \mid X)$ acts as the inverse function of $Q_\theta(\cdot \mid X) \circ \phi_\text{DCP}^{-1}$. As a result, both the DCP approach and quantile recalibration yield QFs and CDFs that correspond to the same underlying distribution.

	
	
	

	\end{proof}
	
	Quantile recalibration with other recalibration maps (e.g., $\phi^\text{EMP}$, $\phi^\text{LIN}$, or $\phi^\text{KDE}$) would correspond to DCP where the empirical quantile $\hat{q}$ is selected using other strategies which does not provide the exact conformal guarantee \cref{eq:conformal_equality}.

	\section{Conclusion}

	The observation that neural network classifiers tend to be miscalibrated \citep{Guo2017-ow} has prompted the development of various approaches for calibrating these models. In this paper, we present the largest empirical study conducted to date on the probabilistic calibration of neural regression models. Our study provides valuable insights into their performance and the selection of calibration methods. Notably, we introduce a novel differentiable calibration map based on kernel density estimation for quantile recalibration, as well as two novel regularization objectives derived from the PCE.

	Our study reveals that regularization methods can provide a favorable tradeoff between calibration and sharpness. However, post-hoc methods demonstrate superior performance in terms of PCE. We attribute this finding to the finite-sample coverage guarantee offered by conformal prediction and demonstrate that quantile recalibration can be viewed as a specific case of conformal prediction.
	
	Future investigations may extend the study of probabilistic calibration to other models, such as tree-based models, and explore alternative notions of calibration \citep{Gneiting2021-vu}. Notably, distribution calibration represents a promising direction, as it has inspired the development of calibration methods \citep{Song2019-bk,Kuleshov2022-pv}.
	
    \section*{Acknowledgement}

    This work was supported by the Fonds de la Recherche Scientifique - FNRS under Grants T.0011.21 and J.0011.20.

	\nocite{Gal2016-xn}
	

	\printbibliography
	
	\newpage
	\appendix
	\onecolumn

	\section{Proofs}
	
	%
	%
	
	\subsection{Integral of the Absolute Difference between CDFs or QFs}
	\label{sec:integral_abs_diff_cdf_quantile}
	
	\begin{proposition}
		\label{th:integral_abs_diff_cdf_quantile}
		Let $F_A, F_B : [0, 1] \to [0, 1]$ denote two strictly increasing CDFs of random variables defined on $[0, 1]$ with corresponding QFs $Q_A$ and $Q_B$.
		Then,
		\begin{equation}
			\label{eq:theorem_integral_abs_diff_cdf_quantile}
			\int_0^1 |F_A(q) - F_B(q)| \,dq = \int_0^1 |Q_A(p) - Q_B(p)| \,dp.
		\end{equation}
	\end{proposition}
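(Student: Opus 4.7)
The plan is to prove the identity via Fubini's theorem by writing the integrand as a double integral of an indicator function, then swapping the roles of the horizontal and vertical axes using the inverse relationship between a strictly increasing CDF and its QF.

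First, I would use the elementary identity $|a - b| = \int_0^1 \indicator[\min(a,b) < p \leq \max(a,b)] \, dp$ applied pointwise to $F_A(q)$ and $F_B(q)$, yielding
\begin{equation*}
\int_0^1 |F_A(q) - F_B(q)| \, dq = \int_0^1 \int_0^1 \indicator\bigl[\min(F_A(q),F_B(q)) < p \leq \max(F_A(q),F_B(q))\bigr] \, dp \, dq.
\end{equation*}

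Next, the key observation: since $F_A$ is strictly increasing with inverse $Q_A$, one has $F_A(q) < p \Leftrightarrow q < Q_A(p)$ (and analogously for $F_B$). Therefore the event $\{F_A(q) < p \leq F_B(q)\}$ coincides (up to a $p$-measure-zero set of boundary values) with $\{Q_B(p) \leq q < Q_A(p)\}$, and symmetrically for the other case. Combining both cases, the indicator inside the double integral equals $\indicator[\min(Q_A(p),Q_B(p)) \leq q < \max(Q_A(p),Q_B(p))]$ almost everywhere on $[0,1]^2$.

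I would then invoke Fubini's theorem (the integrand is nonnegative and bounded, so this is immediate) to exchange the order of integration, obtaining
\begin{equation*}
\int_0^1 \int_0^1 \indicator\bigl[\min(Q_A(p),Q_B(p)) \leq q < \max(Q_A(p),Q_B(p))\bigr] \, dq \, dp = \int_0^1 |Q_A(p) - Q_B(p)| \, dp,
\end{equation*}
which completes the proof. The only delicate step is the equivalence of events in the second paragraph, where one must verify that strict versus non-strict inequalities matter only on a set of measure zero; this is guaranteed by the strict monotonicity assumption on $F_A, F_B$. An alternative viewpoint that could be used as a sanity check is geometric: reflecting the region lying between the graphs of $F_A$ and $F_B$ across the diagonal $p = q$ of the unit square produces the region between the graphs of $Q_A$ and $Q_B$, and this reflection preserves Lebesgue area.
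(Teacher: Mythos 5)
Your proposal is correct and follows essentially the same route as the paper's proof: expressing the absolute difference as an integral of an indicator of the region between the two graphs, converting the event $\{F_A(q) \leq p \leq F_B(q)\}$ into $\{Q_B(p) \leq q \leq Q_A(p)\}$ via strict monotonicity, and applying Fubini's theorem. Your extra remark about strict versus non-strict inequalities mattering only on a measure-zero set is a small refinement the paper sidesteps by using closed inequalities throughout, but the argument is otherwise the same.
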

	
	\begin{proof}
		We define two functions $r, s : [0, 1] \times [0, 1] \to \Set{0, 1}$ where
		\begin{align}
			r(q, p) & = \begin{cases}
				1 & \text{if } F_A(q) \leq p \leq F_B(q) \text{ or } F_B(q) \leq p \leq F_A(q) \\
				0 & \text{otherwise},
			\end{cases} \\
			\text{and } s(q, p) & = \begin{cases}
				1 & \text{if } Q_A(p) \leq q \leq Q_B(p) \text{ or } Q_B(p) \leq q \leq Q_A(p) \\
				0 & \text{otherwise}.
			\end{cases}
		\end{align}
		
		Let us show that $r$ and $s$ are equal. Considering $q \in [0, 1]$ and $p \in [0, 1]$, we can write
		\begin{align}
			& F_A(q) \leq p \leq F_B(q)                                          \\
			\iff & (F_A(q) \leq p) \land (p \leq F_B(q) )                                 \\
			\iff & (q \leq Q_A(p)) \land (Q_B(p) \leq q) \label{eq:increasing_assumption} \\
			\iff & Q_B(p) \leq q \leq Q_A(p),
		\end{align}
		where \cref{eq:increasing_assumption} holds since both $F_A$ and $F_B$ are strictly increasing.
		
		Similarly, $F_B(q) \leq p \leq F_A(q) \iff Q_A(p) \leq q \leq Q_B(p)$.
		Hence $r(q, p) = 1 \iff s(q, p) = 1$ and $r$ and $s$ are equal.
		
		By Fubini's theorem, we have
		\begin{equation}
			\label{eq:double_integral_fubini}
			\int_0^1 \int_0^1 r(q, p) \,dp\,dq = \int_0^1 \int_0^1 s(q, p) \,dq\,dp.
		\end{equation}
		
		Furthermore, upon evaluating the inner integrals, we obtain
		\begin{align}
			\int_0^1 r(q, p) \,dp
			& = \begin{cases}
				\displaystyle \int_{F_A(q)}^{F_B(q)} 1 \,dp & \text{if } F_A(q) \leq F_B(q) \\
				\displaystyle \int_{F_B(q)}^{F_A(q)} 1 \,dp & \text{otherwise}              \\
			\end{cases} \\
			& = |F_A(q) - F_B(q)|.
		\end{align}
		Similarly, we have $\displaystyle \int_0^1 s(q, p) \,dq = |Q_A(p) - Q_B(p)|$. Finally, by substituting these results in \cref{eq:double_integral_fubini}, we prove \cref{eq:theorem_integral_abs_diff_cdf_quantile}.
		
	\end{proof}

	\section{Detailed Results}
	\label{sec:detailed_results}
	
	This section presents additional experimental results.

	
	
	
	

	\subsection{Comparison between Recalibration, Conformal Prediction and Regularization Approaches per Base Model}
	\label{sec:comparison_per_base_model}
	
	First, we present the results of our experiments comparing recalibration, conformal prediction, and regularization approaches. Our objective is to determine which metrics are improved by these methods compared to a vanilla model. We divide our comparisons based on the three base models considered: \texttt{MIX-NLL} (\cref{fig:posthoc_or_regul_vs_vanilla/nll}), \texttt{MIX-CRPS} (\cref{fig:posthoc_or_regul_vs_vanilla/crps}) and \texttt{SQR-CRPS} (\cref{fig:posthoc_or_regul_vs_vanilla/wis}).
	
	Since NLL, CRPS, and standard deviation cannot be directly compared across different datasets, we utilize Cohen's d as an effect size measure, with the baseline being a vanilla model of the same base model. For instance, the baseline for \texttt{MIX-CRPS + Rec-EMP} is \texttt{MIX-CRPS}. Additionally, we provide critical difference diagrams to assess the significance of differences.
	
	Overall, recalibration and conformal prediction demonstrate significantly improved PCE compared to the baseline, although there is a trade-off with other metrics. For both \texttt{MIX-NLL} and \texttt{MIX-CRPS}, \texttt{Rec-EMP} yields infinite NLL, \texttt{Rec-LIN} substantially increases NLL, while \texttt{Rec-KDE} has a lesser impact on NLL. However, \texttt{Rec-KDE} results in a significant degradation of CRPS compared to other recalibration methods when the base model is MIX-CRPS. In the case of quantile predictions, \texttt{CQR} significantly improves PCE.
	
	While regularization methods generally lead to improved PCE, they are still outperformed by recalibration and conformal prediction in this regard. However, we observe that with the \texttt{MIX-NLL} base model, regularization methods (\texttt{PCE-KDE}, \texttt{PCE-Sort} and \texttt{QR}) have minimal impact on CRPS, NLL, and STD compared to recalibration methods. With the \texttt{MIX-CRPS} base model, the difference in CRPS between recalibration and regularization is less pronounced. Nevertheless, it is evident that regularization methods \texttt{PCE-KDE} and \texttt{PCE-Sort}, which rely on PCE, result in less sharp predictions compared to recalibration methods, which produce sharper predictions.
	
	Regarding quantile predictions, the case is reversed: conformal prediction (\texttt{SQR-CRPS + CQR}) yields less sharp predictions, while regularization with \texttt{SQR-CRPS + Trunc} leads to sharper predictions.

	
	
	
	\begin{figure}[H]
		\centering
		\subcaptionbox{
			Boxplots of Cohen's d of different metrics on all datasets, with respect to \texttt{MIX-NLL}.
		}{
			\includegraphics[width=\linewidth]{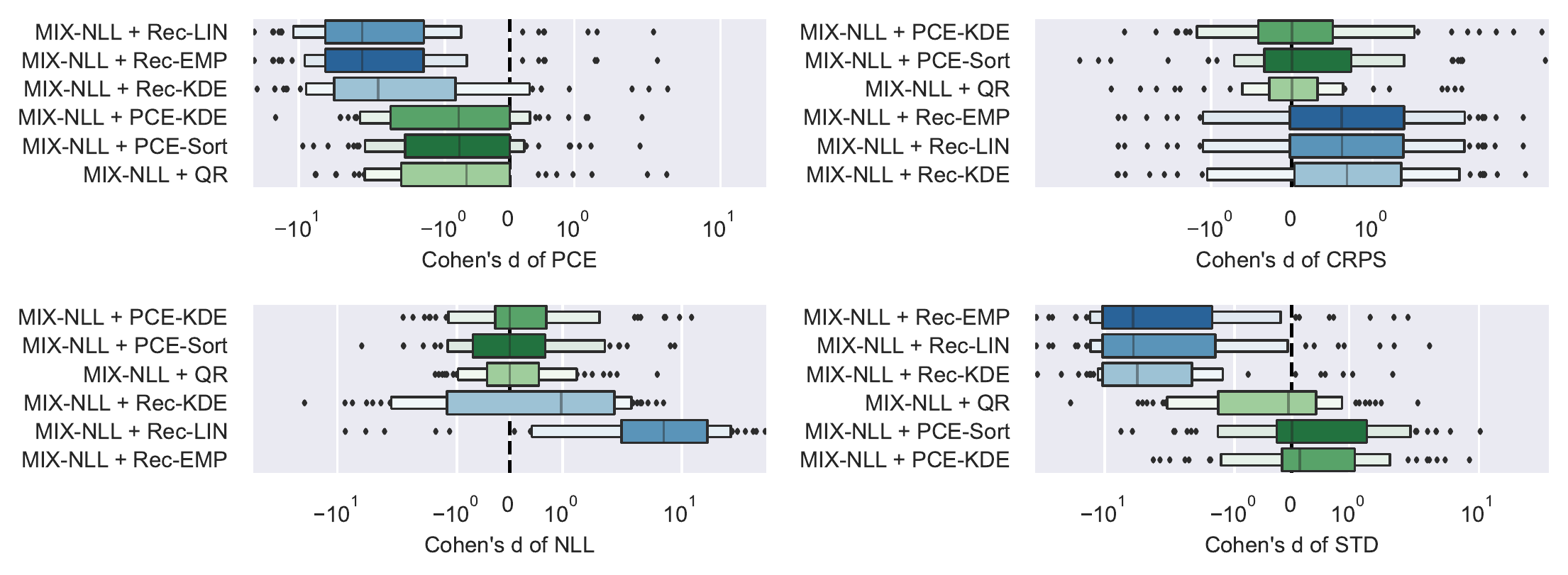}
			\vspace{-0.3cm}
		}
		\subcaptionbox{
			PCE
		}{
			\includegraphics[width=8cm]{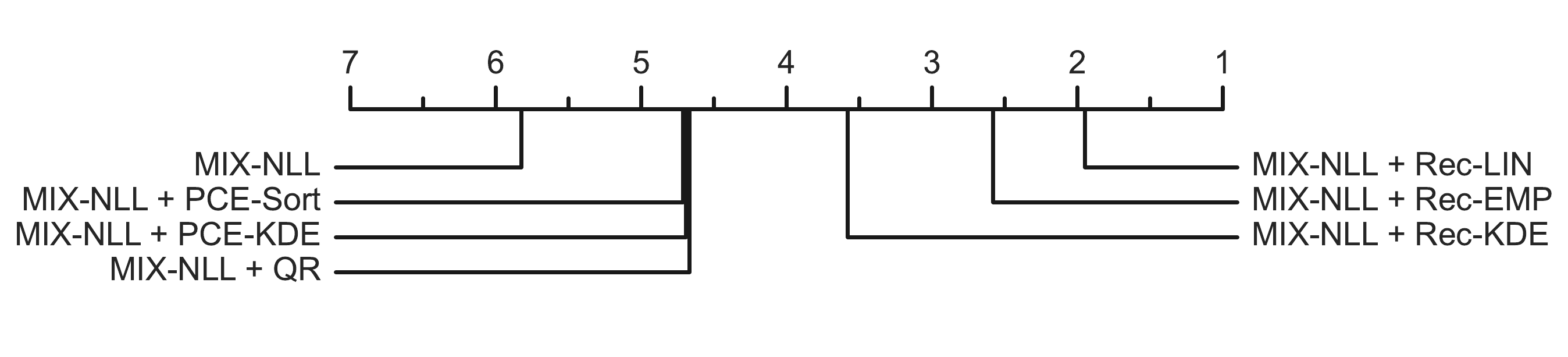}
			\vspace{-0.3cm}
		}
		\subcaptionbox{
			CRPS
		}{
			\includegraphics[width=8cm]{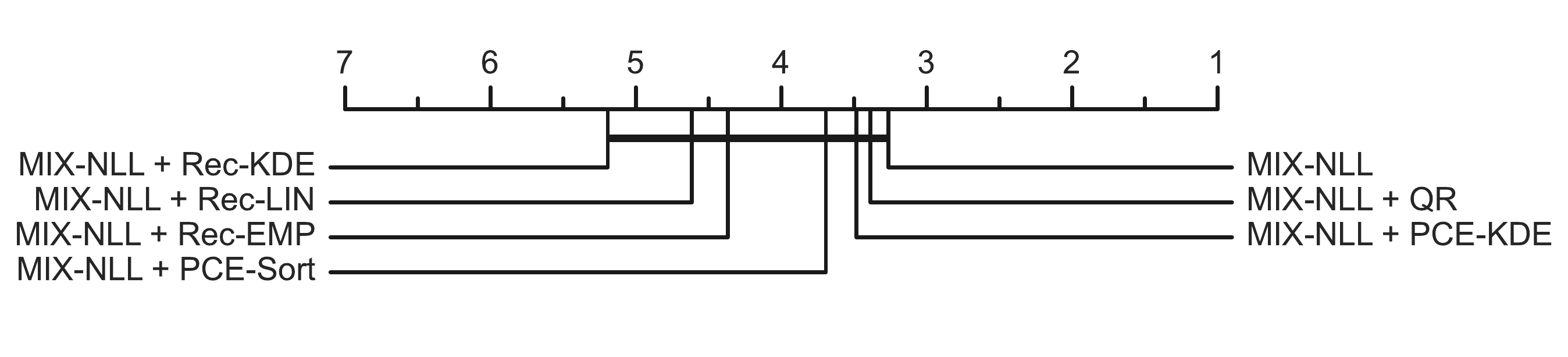}
			\vspace{-0.3cm}
		}
		\subcaptionbox{
			NLL
		}{
			\includegraphics[width=8cm]{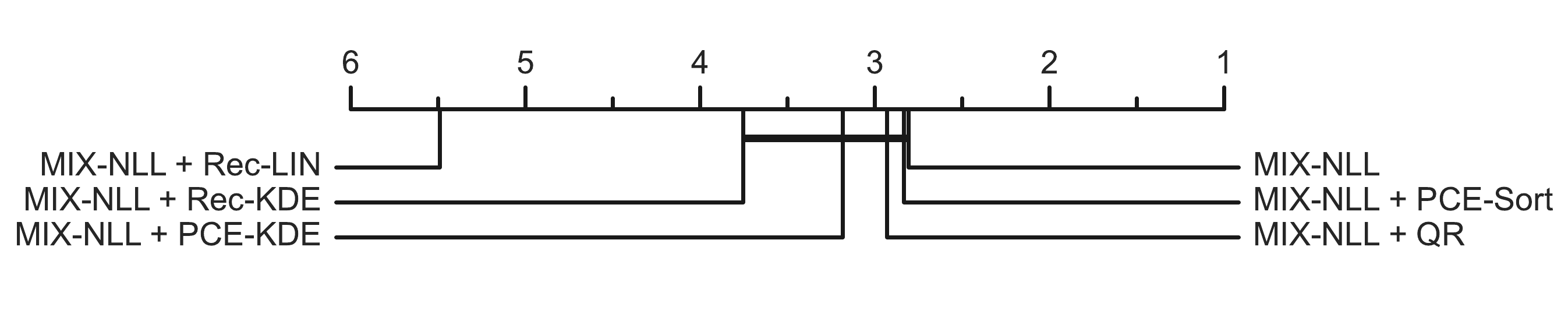}
			\vspace{-0.3cm}
		}
		\subcaptionbox{
			STD
		}{
			\includegraphics[width=8cm]{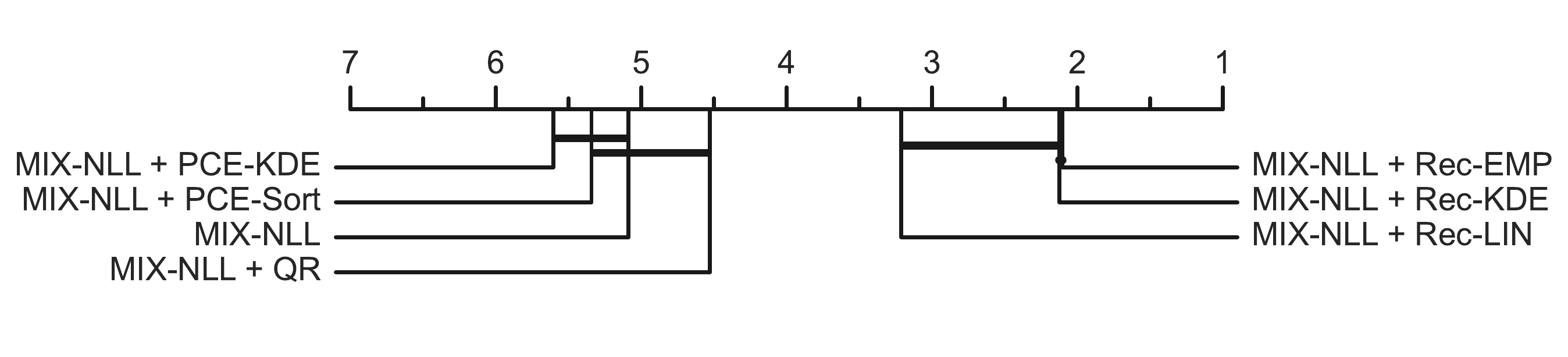}
			\vspace{-0.3cm}
		}
		\caption{Comparison of different metrics where the base model is \texttt{MIX-NLL}.}
		\label{fig:posthoc_or_regul_vs_vanilla/nll}
	\end{figure}
	
	
	\begin{figure}[H]
		\centering
		\subcaptionbox{
			Boxplots of Cohen's d of different metrics on all datasets, with respect to \texttt{MIX-CRPS}.
		}{
			\includegraphics[width=\linewidth]{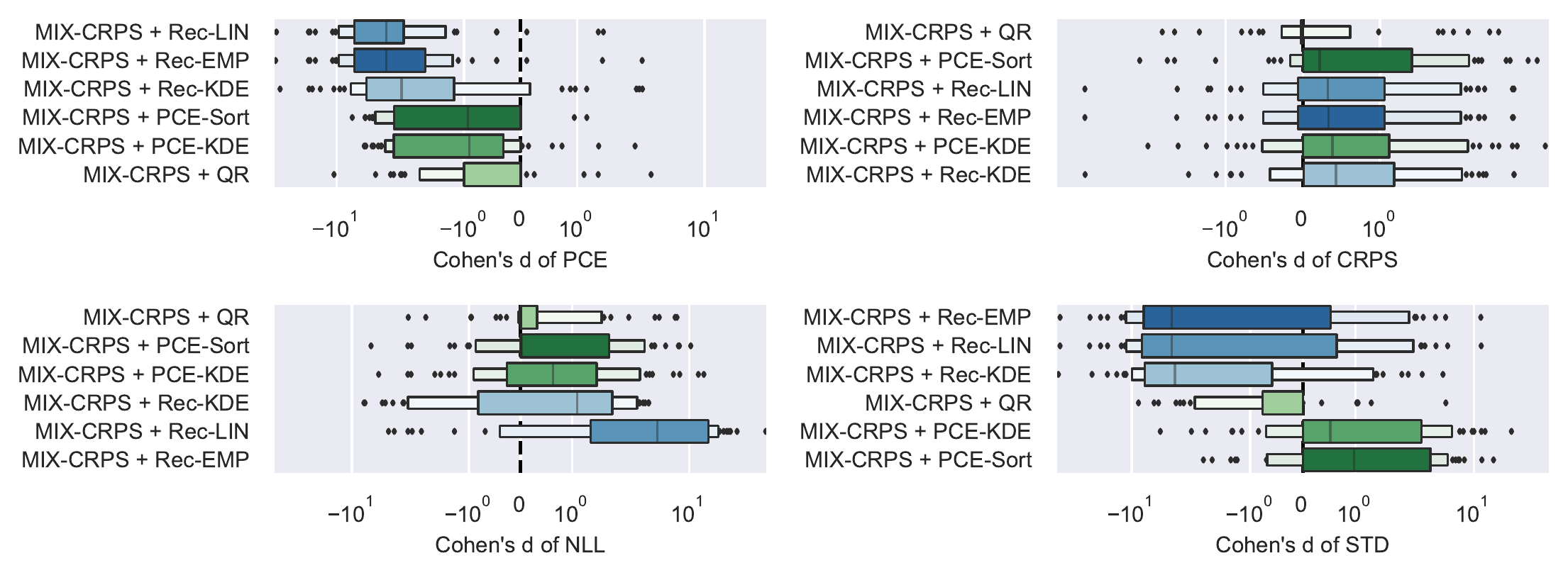}
			\vspace{-0.3cm}
		}
		\subcaptionbox{
			PCE
		}{
			\includegraphics[width=8cm]{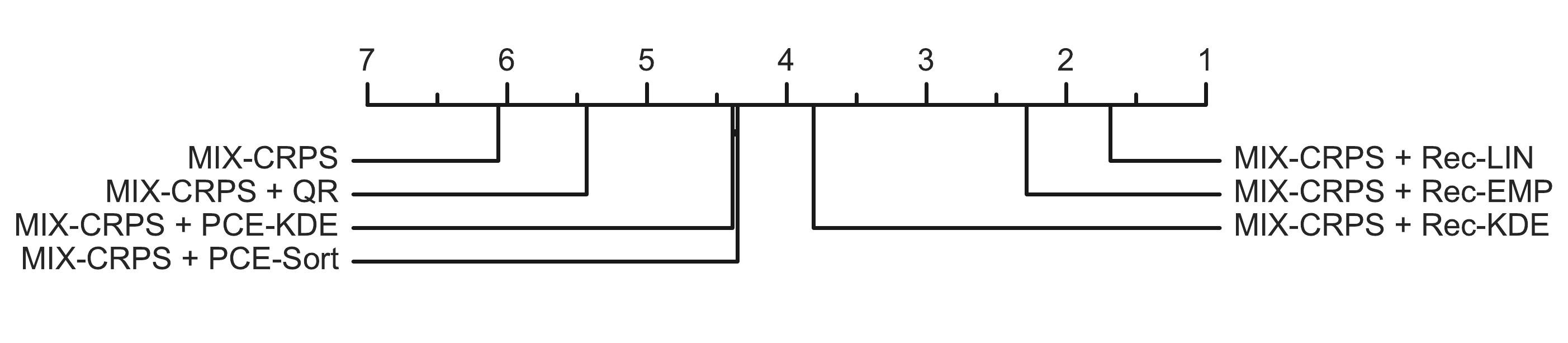}
			\vspace{-0.3cm}
		}
		\subcaptionbox{
			CRPS
		}{
			\includegraphics[width=8cm]{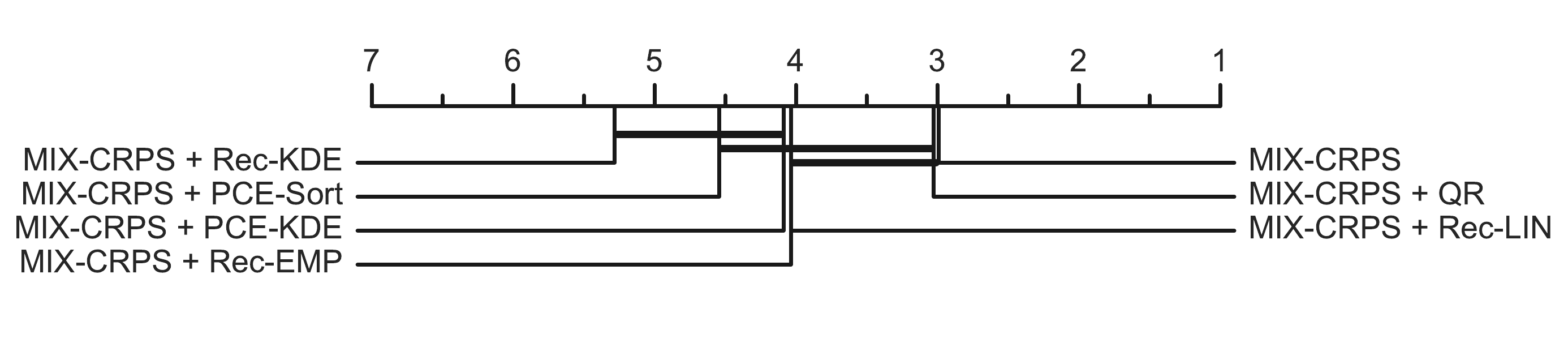}
			\vspace{-0.3cm}
		}
		\subcaptionbox{
			NLL
		}{
			\includegraphics[width=8cm]{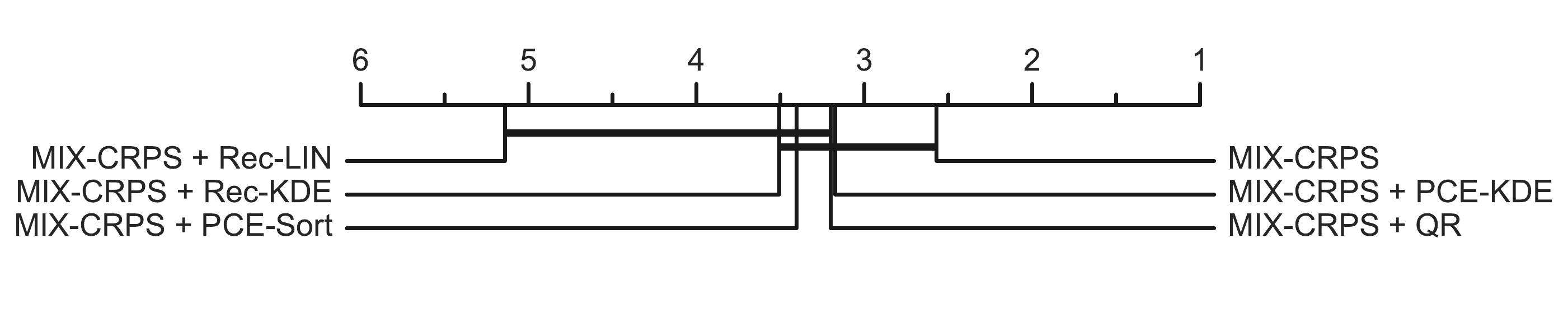}
			\vspace{-0.3cm}
		}
		\subcaptionbox{
			STD
		}{
			\includegraphics[width=8cm]{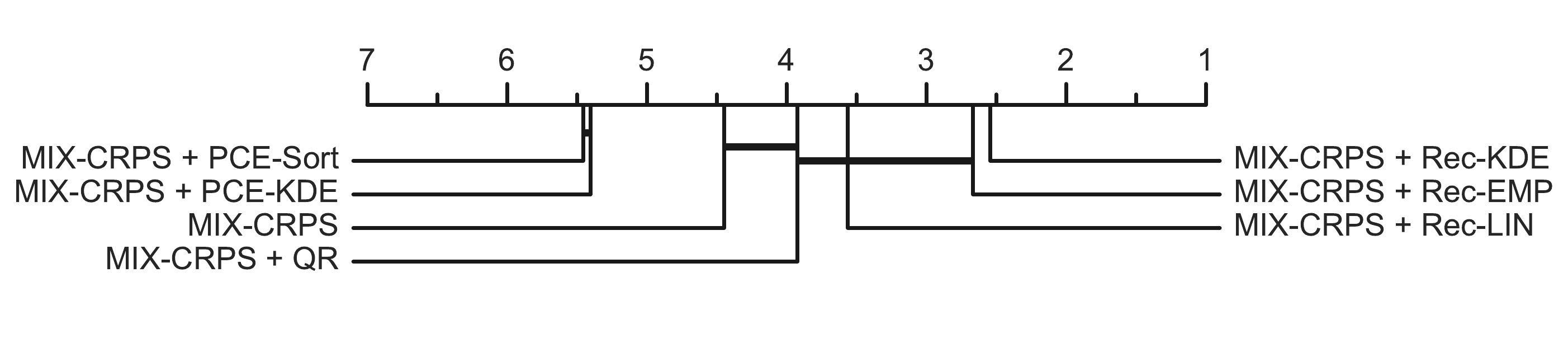}
			\vspace{-0.3cm}
		}
		\caption{Comparison of different metrics where the base model is \texttt{MIX-CRPS}.}
		\label{fig:posthoc_or_regul_vs_vanilla/crps}
	\end{figure}
	
	
	\begin{figure}[H]
		\centering
		\subcaptionbox{
			Boxplots of Cohen's d of different metrics on all datasets, with respect to \texttt{SQR-CRPS}.
		}{
			\includegraphics[width=\linewidth]{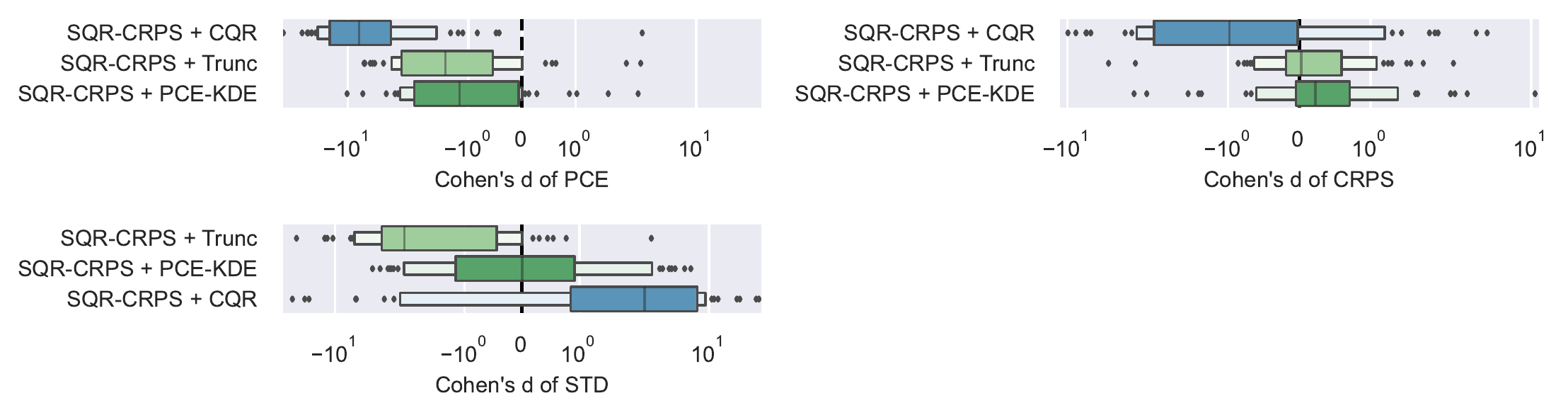}
			\vspace{-0.3cm}
		}
		\subcaptionbox{
			PCE
		}{
			\includegraphics[width=8cm]{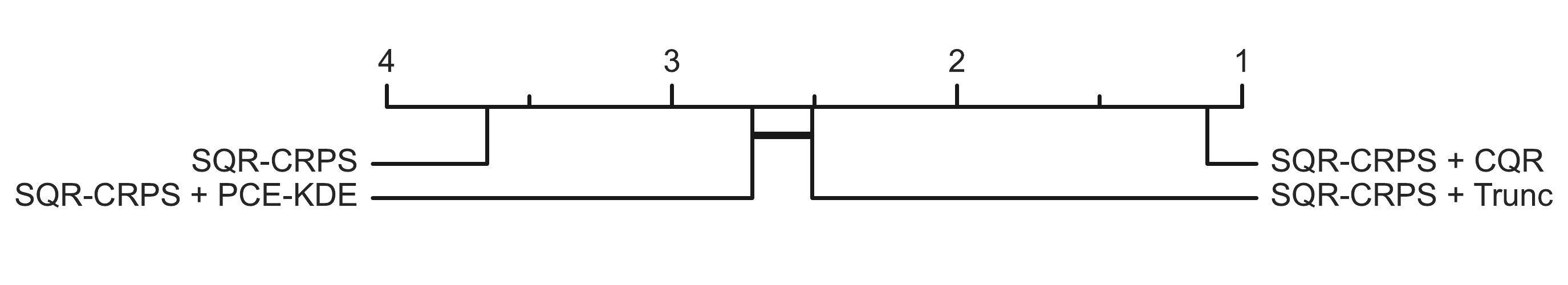}
			\vspace{-0.3cm}
		}
		\subcaptionbox{
			CRPS
		}{
			\includegraphics[width=8cm]{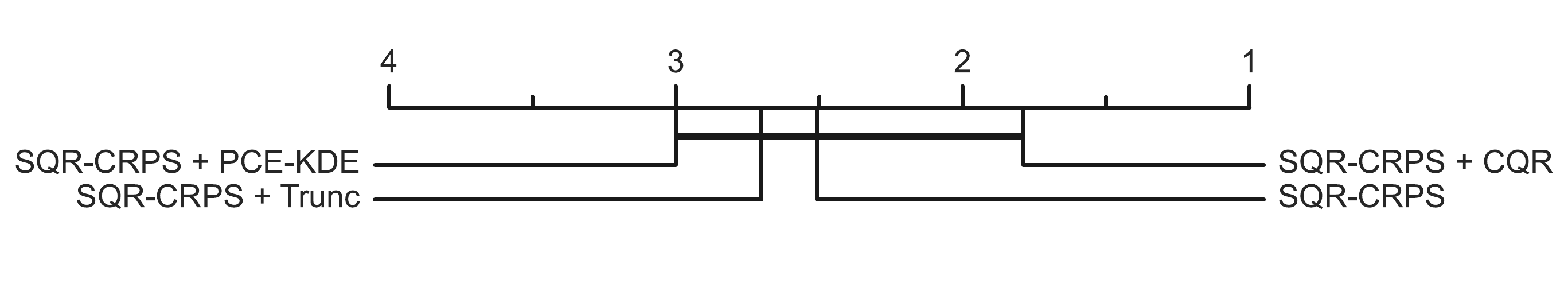}
			\vspace{-0.3cm}
		}
		\subcaptionbox{
			STD
		}{
			\includegraphics[width=8cm]{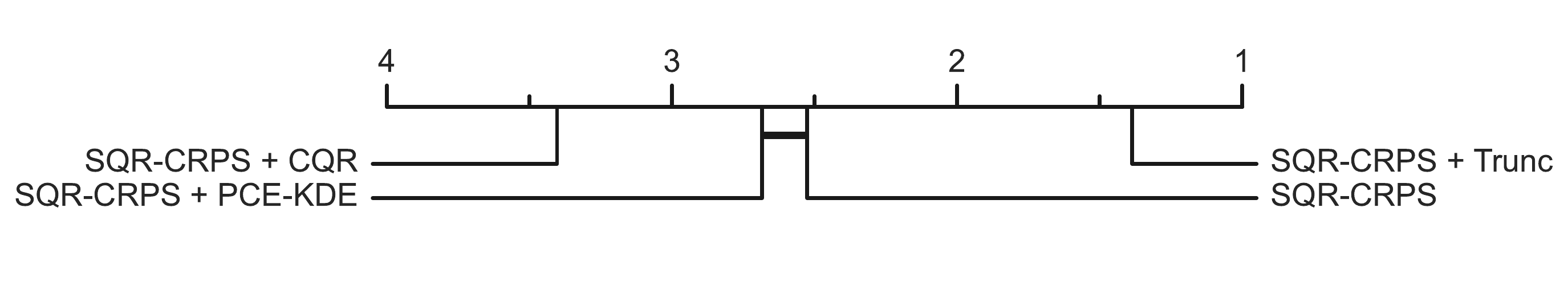}
			\vspace{-0.3cm}
		}
		\hspace{8cm}
		\caption{Comparison of different metrics where the base model is \texttt{SQR-CRPS}.}
		\label{fig:posthoc_or_regul_vs_vanilla/wis}
	\end{figure}
	
	\subsection{Combining Regularization and Post-hoc Methods}
	\label{sec:combining_regul_and_post_hoc}

	In this paper, we have established that post-hoc methods are generally more favorable than regularization methods when the primary objective is to enhance probabilistic calibration. Since regularization methods operate during training and do not alter the form of predictions (e.g., Gaussian mixture predictions), they can be easily combined with post-hoc methods. In this section, we address the question: "Which metrics do regularization methods improve when combined with a post-hoc method compared to the same model without regularization?"

	To ensure clarity, we focus our presentation on a selection of paired regularization and post-hoc methods. \cref{fig:posthoc_and_regul_vs_posthoc} illustrates the impact of regularization on various metrics for these pairs. In \cref{fig:posthoc_and_regul_vs_posthoc/main_metrics/cohen_d_boxplot}, the baseline corresponds to the same post-hoc method without regularization, enabling a direct measurement of the effect of adding regularization to a post-hoc method. It is important to note that the boxplots in this figure cannot be directly compared due to the different baselines.

	The critical difference diagrams provide a comparison of all methods, with and without regularization. Overall, when combined with post-hoc methods, regularization has a negative impact: no regularization method significantly improves probabilistic calibration, and they tend to negatively affect CRPS, NLL, and STD metrics.

	\begin{figure}[H]
		\centering
		\subcaptionbox{
			Boxplots of Cohen's d of different metrics on all datasets, with respect to the same model except that regularization is not applied.
			\label{fig:posthoc_and_regul_vs_posthoc/main_metrics/cohen_d_boxplot}
		}{
			\includegraphics[width=\linewidth]{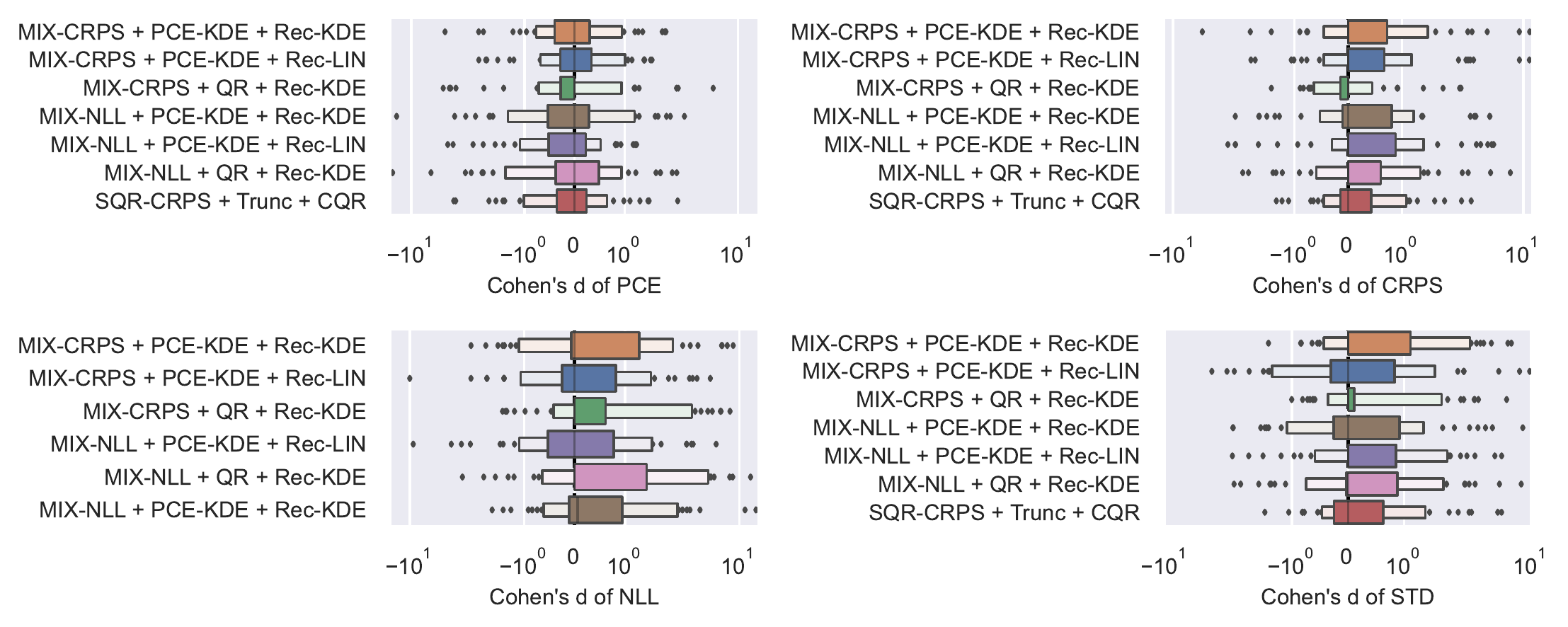}
			\vspace{-0.3cm}
		}
		\subcaptionbox{
			PCE
		}{
			\includegraphics[width=8cm]{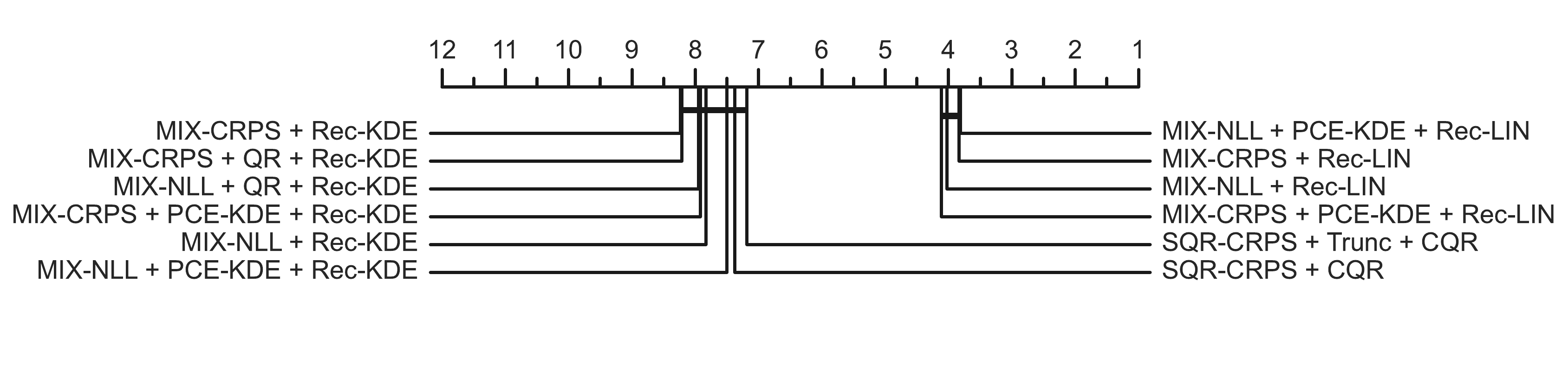}
			\vspace{-0.3cm}
		}
		\subcaptionbox{
			CRPS
		}{
			\includegraphics[width=8cm]{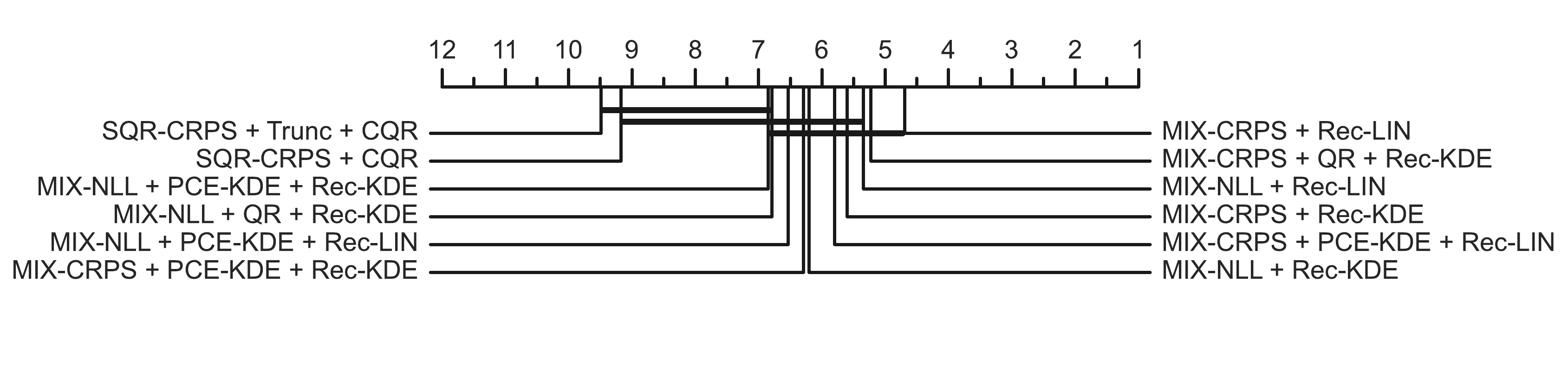}
			\vspace{-0.3cm}
		}
		\subcaptionbox{
			NLL
		}{
			\includegraphics[width=8cm]{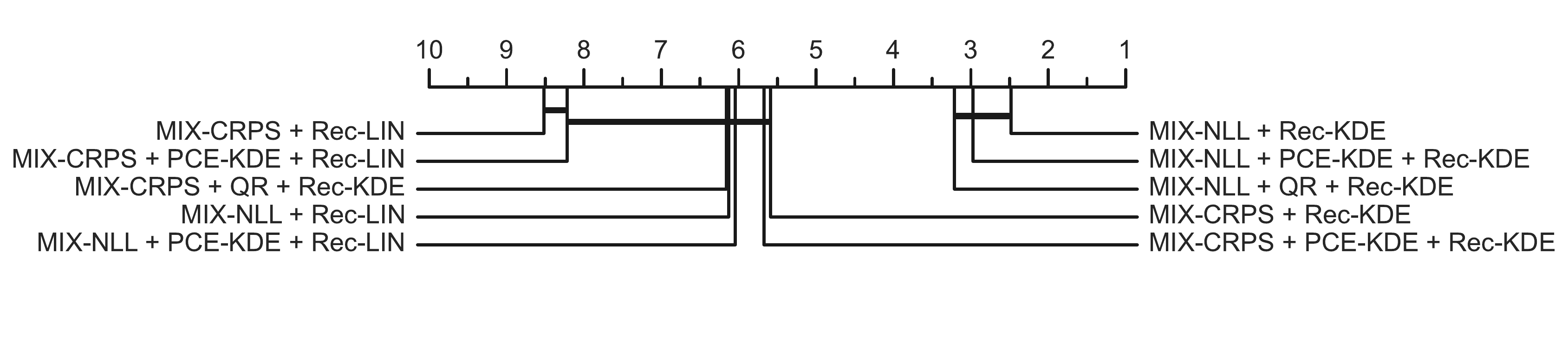}
			\vspace{-0.3cm}
		}
		\subcaptionbox{
			STD
		}{
			\includegraphics[width=8cm]{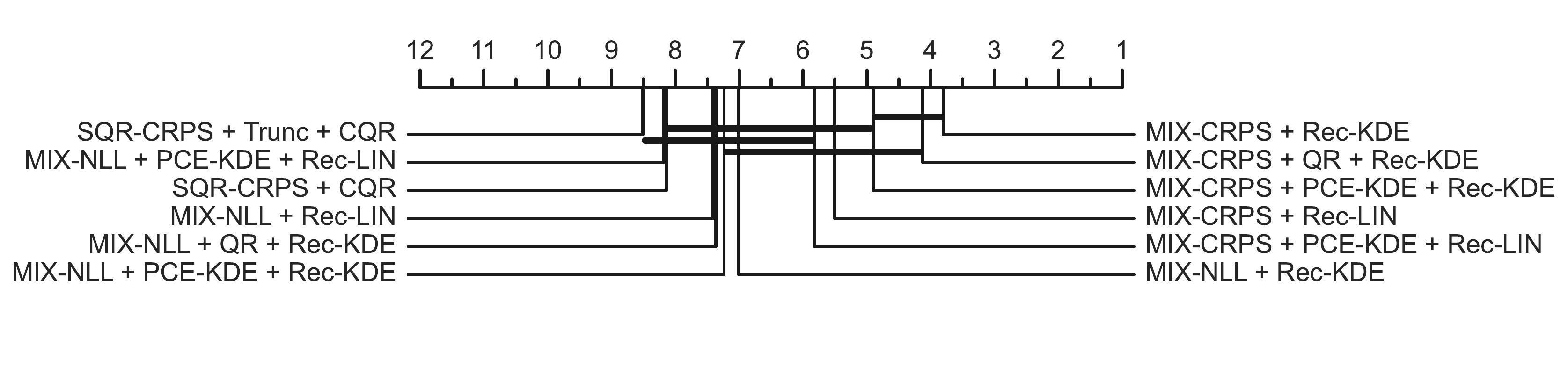}
			\vspace{-0.3cm}
		}
		\caption{Comparison of different metrics showing the effect of regularization when combined with a post-hoc method, compared to the same model without regularization.}
		\label{fig:posthoc_and_regul_vs_posthoc}
	\end{figure}
	
	\subsection{Post-hoc Calibration based on the Training Dataset}
	\label{sec:posthoc_training_dataset}
	
	In this paper, the calibration map or conformity scores have been computed on a separate calibration dataset, following common practice in the literature. However, holding out data for post-hoc calibration reduces the quantity of training data. For the sake of clarity, we focus our analysis on the \texttt{MIX-NLL} and \texttt{SQR-CRPS} base losses
	
	In this section, we compare post-hoc calibration based on the training dataset to post-hoc calibration based on the calibration dataset. We aim to answer the question: "Can it be beneficial to use post-hoc calibration based on the training dataset, and should it be preferred over regularization methods when there is no calibration dataset available?" One advantage of regularization methods and post-hoc calibration methods based on the training dataset is that the base model can be trained on more data (80\% in our experiments, compared to 65\% when holding out the calibration dataset).
	
	\cref{fig:posthoc_dataset_train_vs_calib} presents a comparison of different methods, with post-hoc methods trained on the calibration dataset indicated by \texttt{(calib)} and those trained on the training dataset indicated by \texttt{(train)}. We observe that post-hoc methods based on the calibration dataset tend to significantly outperform their counterparts based on the training dataset in terms of probabilistic calibration. Specifically, \texttt{MIX-NLL + Rec-LIN} and \texttt{MIX-NLL + Rec-KDE} achieve significantly better calibration when the calibration map is learned on the calibration dataset. Similarly, \texttt{SQR-CRPS + CQR} tends to improve calibration when conformal prediction is based on the calibration dataset. It is worth noting that even without a calibration dataset, post-hoc methods tend to be better calibrated than regularization methods.
	
	Finally, we observe that post-hoc methods based on the training dataset tend to achieve better CRPS and NLL scores, although not significantly. Additionally, they are also significantly sharper. This may be attributed to the larger training dataset available to the base model when there is no held-out dataset.

	\begin{figure}[H]
		\centering
		\subcaptionbox{
			Boxplots of Cohen's d of different metrics on all datasets, with respect to \texttt{MIX-NLL}.
		}{
			\includegraphics[width=\linewidth]{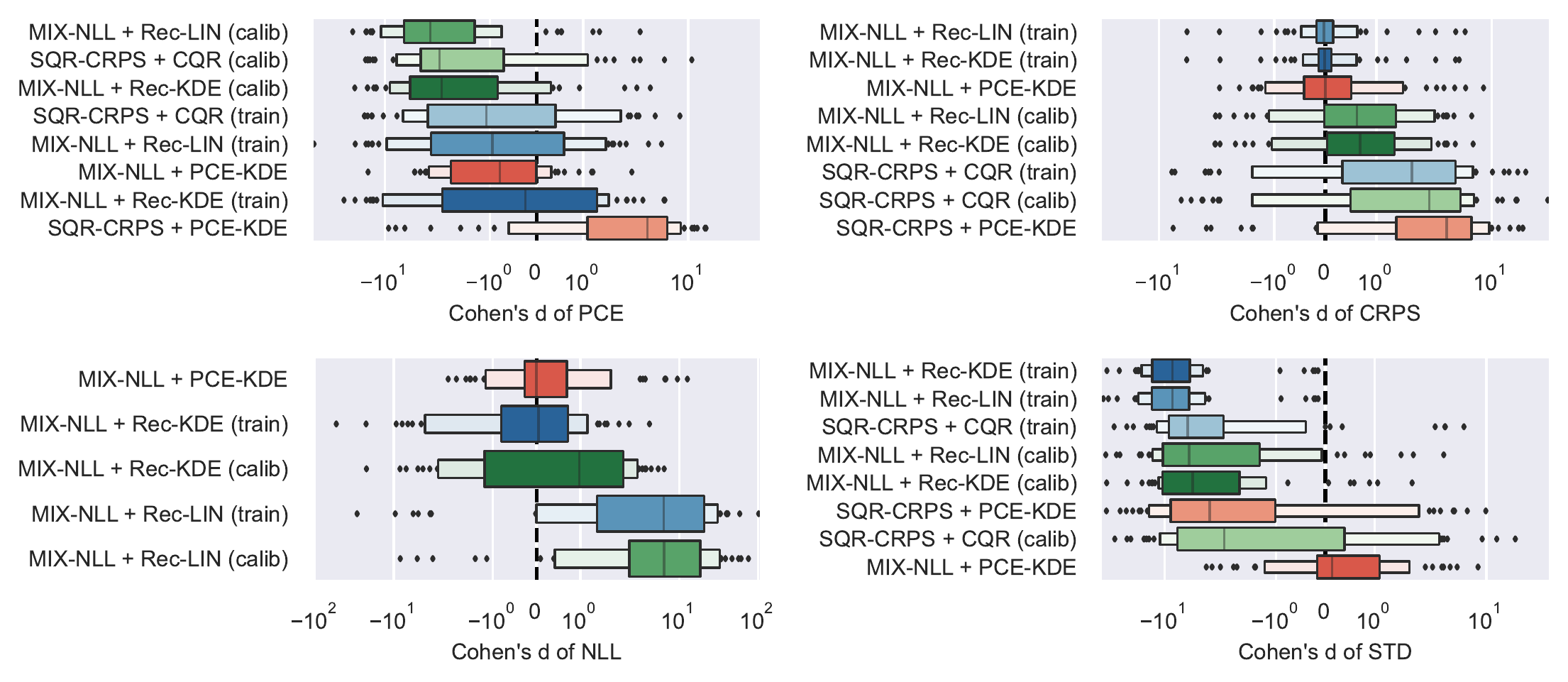}
			\vspace{-0.3cm}
		}
		\subcaptionbox{
			PCE
		}{
			\includegraphics[width=8cm]{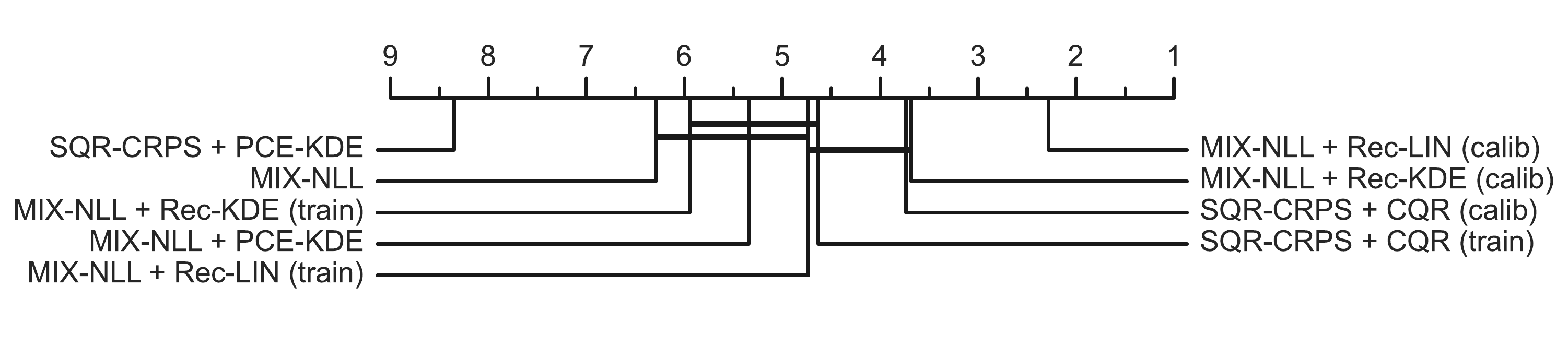}
			\vspace{-0.3cm}
		}
		\subcaptionbox{
			CRPS
		}{
			\includegraphics[width=8cm]{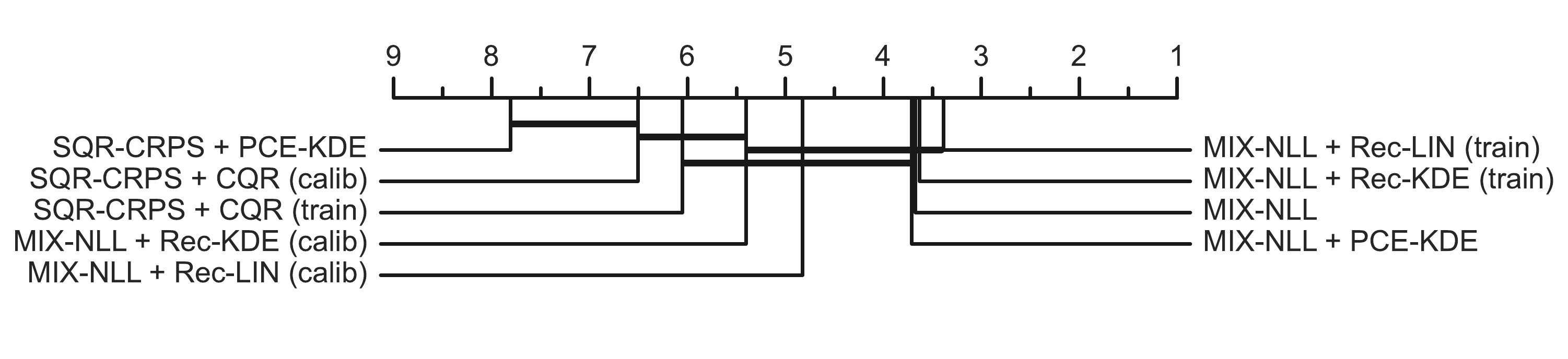}
			\vspace{-0.3cm}
		}
		\subcaptionbox{
			NLL
		}{
			\includegraphics[width=8cm]{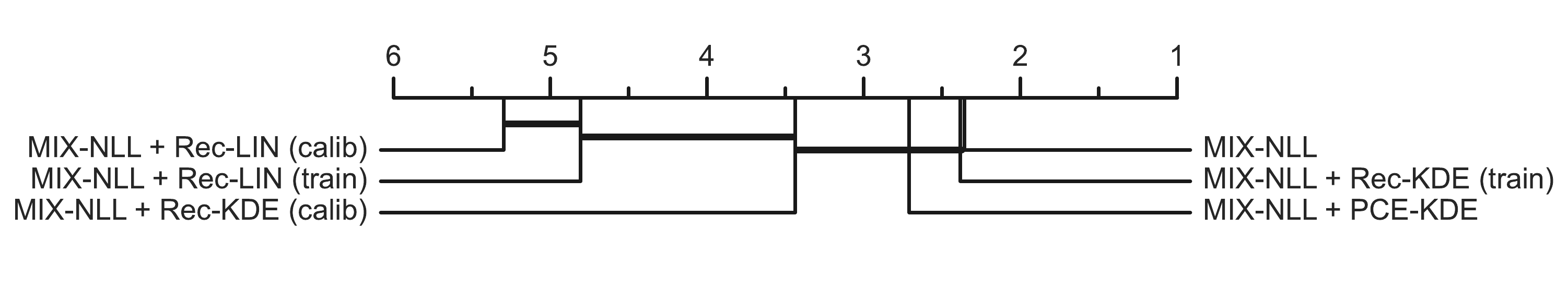}
			\vspace{-0.3cm}
		}
		\subcaptionbox{
			STD
		}{
			\includegraphics[width=8cm]{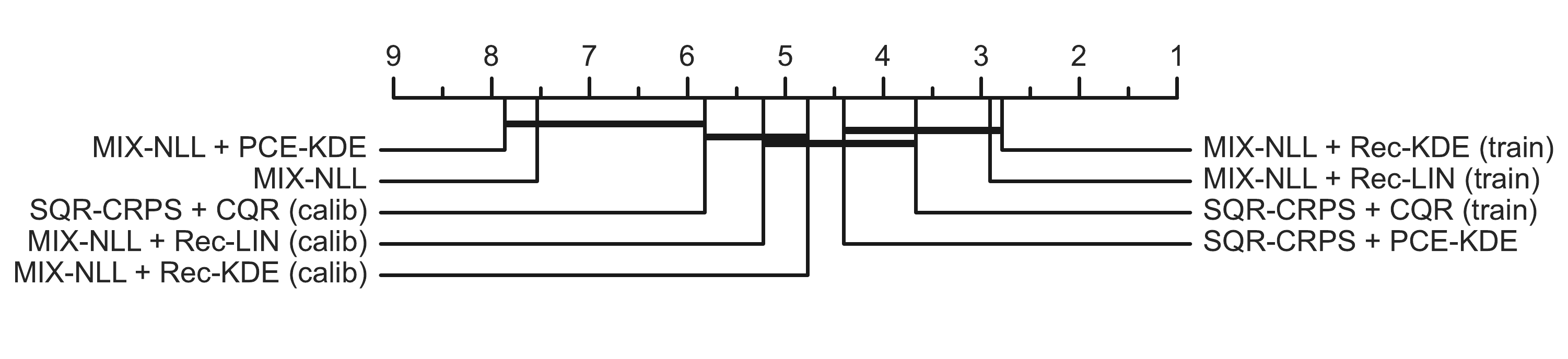}
			\vspace{-0.3cm}
		}
		\caption{Comparison of different metrics.}
		\label{fig:posthoc_dataset_train_vs_calib}
	\end{figure}
	
	%
	
	\subsection{Calibration of Vanilla Models}
	\label{sec:calibration_vanilla}
	
	\cref{fig:pce_and_rel_diags/vanilla_wis} and \cref{fig:pce_and_rel_diags/vanilla_crps} provide additional results from our empirical study in \cref{sec:empirical_study}, specifically focusing on the PCE obtained with \texttt{MIX-CRPS} and \texttt{SQR-CRPS}. The datasets are ordered in the same manner as shown in \cref{fig:uqce_and_rel_diags_p_values} for comparison. We observe that \texttt{SQR-CRPS} is less calibrated compared to \texttt{MIX-NLL}. 
	
	\begin{figure}
		\centering
		\includegraphics[width=\textwidth]{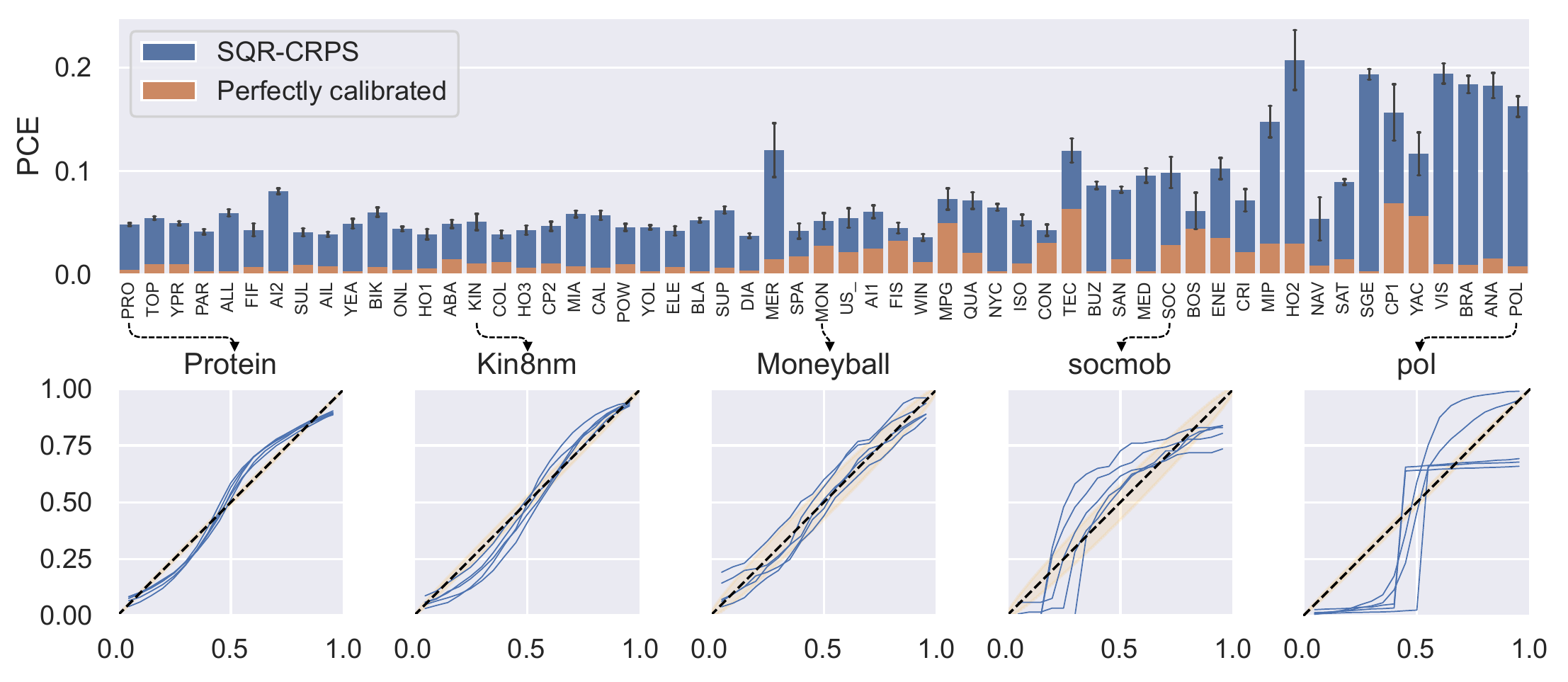}
		\caption{PCE obtained on different datasets, with examples of reliability diagrams.
			The height of each bar is the mean PCE of 5 runs with different dataset splits while the error bar represents the standard error of the mean.
			For 5 datasets, the PIT reliability diagrams of 5 runs are displayed in the bottom row.
		}
		\caption{PCE of \texttt{SQR-CRPS}, on all datasets.}
		\label{fig:pce_and_rel_diags/vanilla_wis}
	\end{figure}
	
	\begin{figure}
		\centering
		\includegraphics[width=\textwidth]{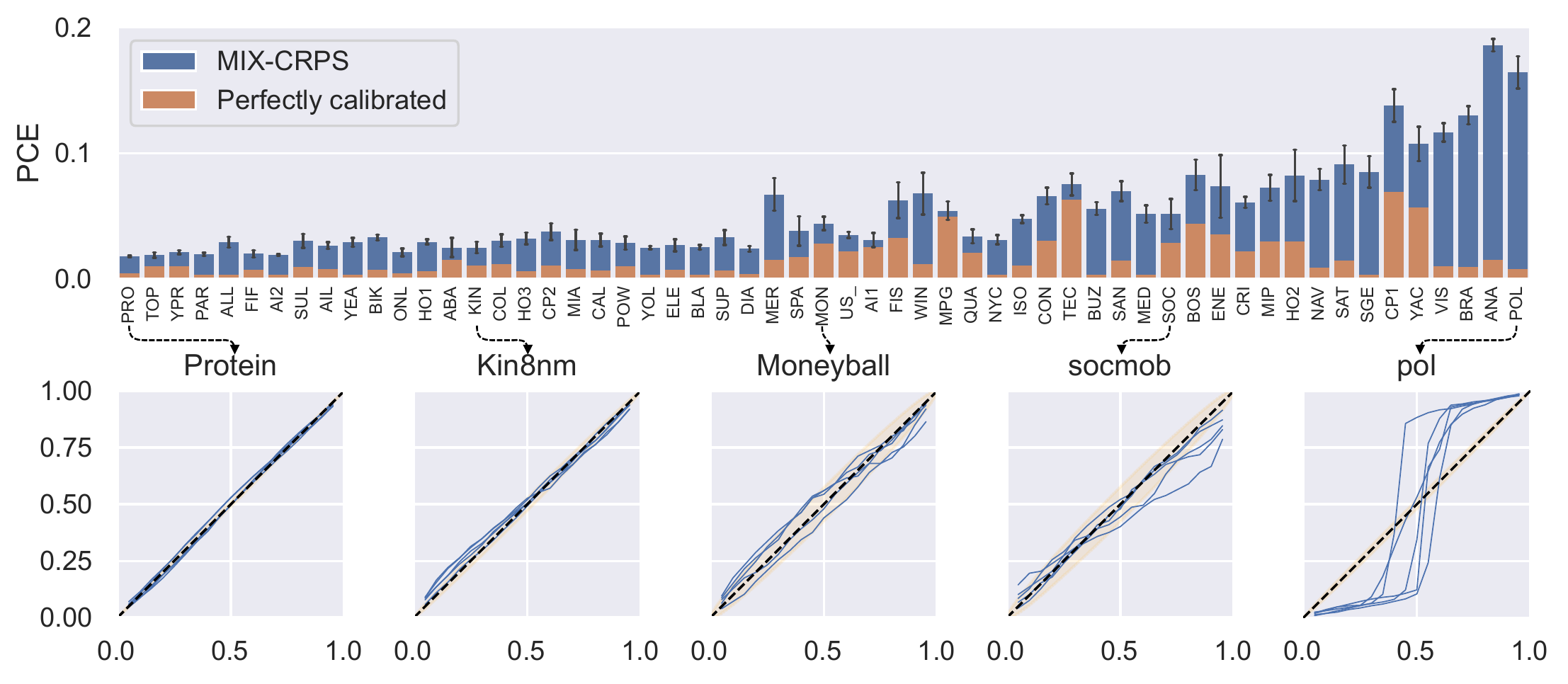}
		\caption{PCE of \texttt{MIX-CRPS}, on all datasets.}
		\label{fig:pce_and_rel_diags/vanilla_crps}
	\end{figure}
	
	\subsection{Distribution of the Test Statistic}
	\label{sec:distribution_test_statistic}
	
	\cref{fig:hist_test_statistic} shows the distribution of the test statistic, as described in \cref{sec:empirical_study}.
	\rev{We observe that, in a lot of cases, the average PCE of the compared models is larger than all the $10^4$ samples of the average PCE from a probabilistically calibrated model.
		Among the different calibration methods, post-hoc calibration with \texttt{MIX-NLL + Rec-EMP} achieves the highest level of calibration performance in the majority of cases.}
	
	\begin{figure}[H]
		\centering
		\includegraphics[width=\linewidth]{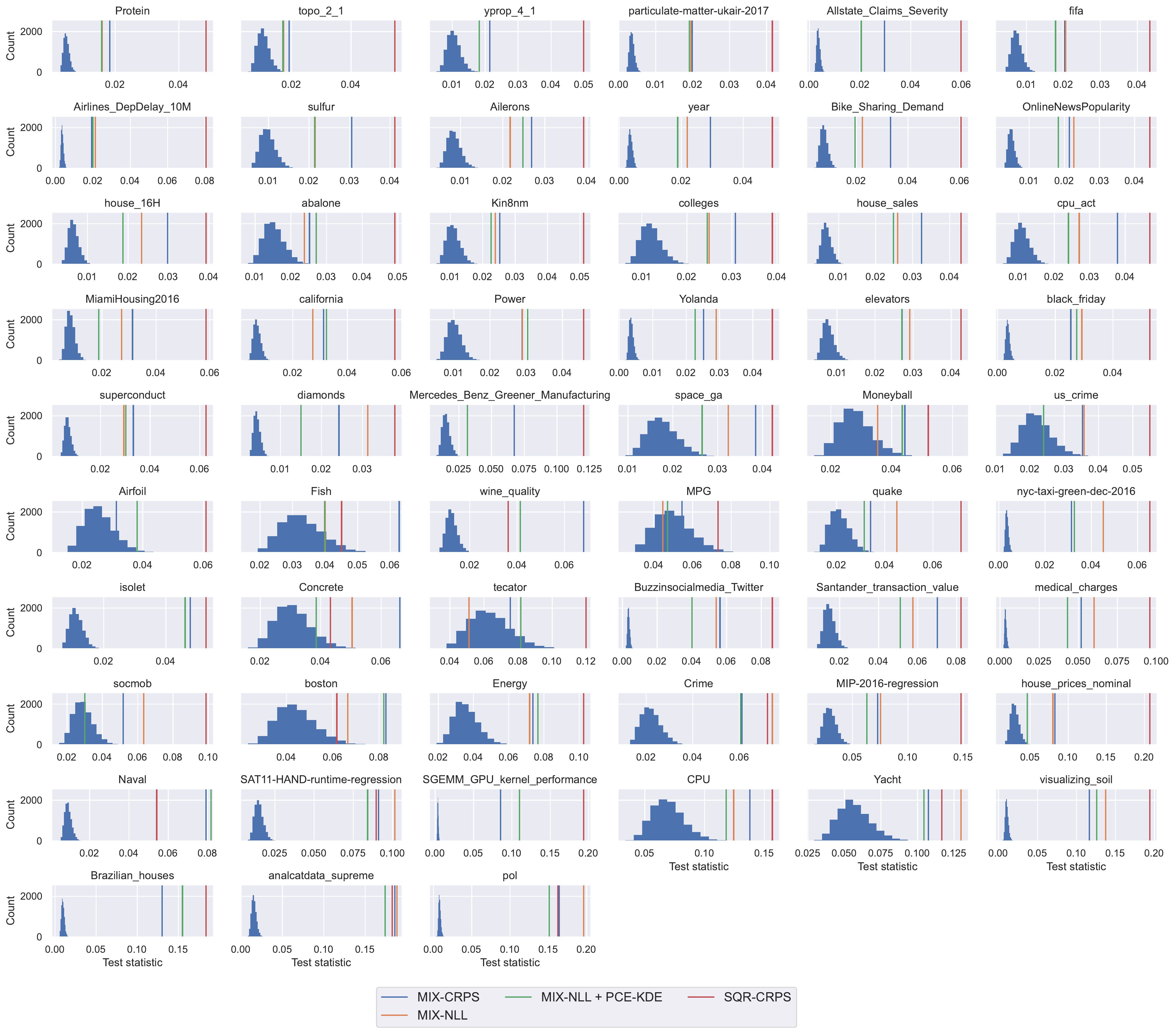}
		\caption{Distribution of the test statistic on all datasets for different models.}
		\label{fig:hist_test_statistic}
	\end{figure}
	
	\subsection{Reliability Diagrams}
	\label{sec:rel_diags}
	
	\cref{fig:reliability_base_loss} and \cref{fig:reliability_posthoc} compare reliability diagrams obtained on models with and without post-hoc calibration, respectively. With only a few exceptions, the post-hoc calibrated models exhibit a visual proximity to the diagonal line.

	\begin{figure}[H]
		\centering
		\includegraphics[width=0.95\linewidth]{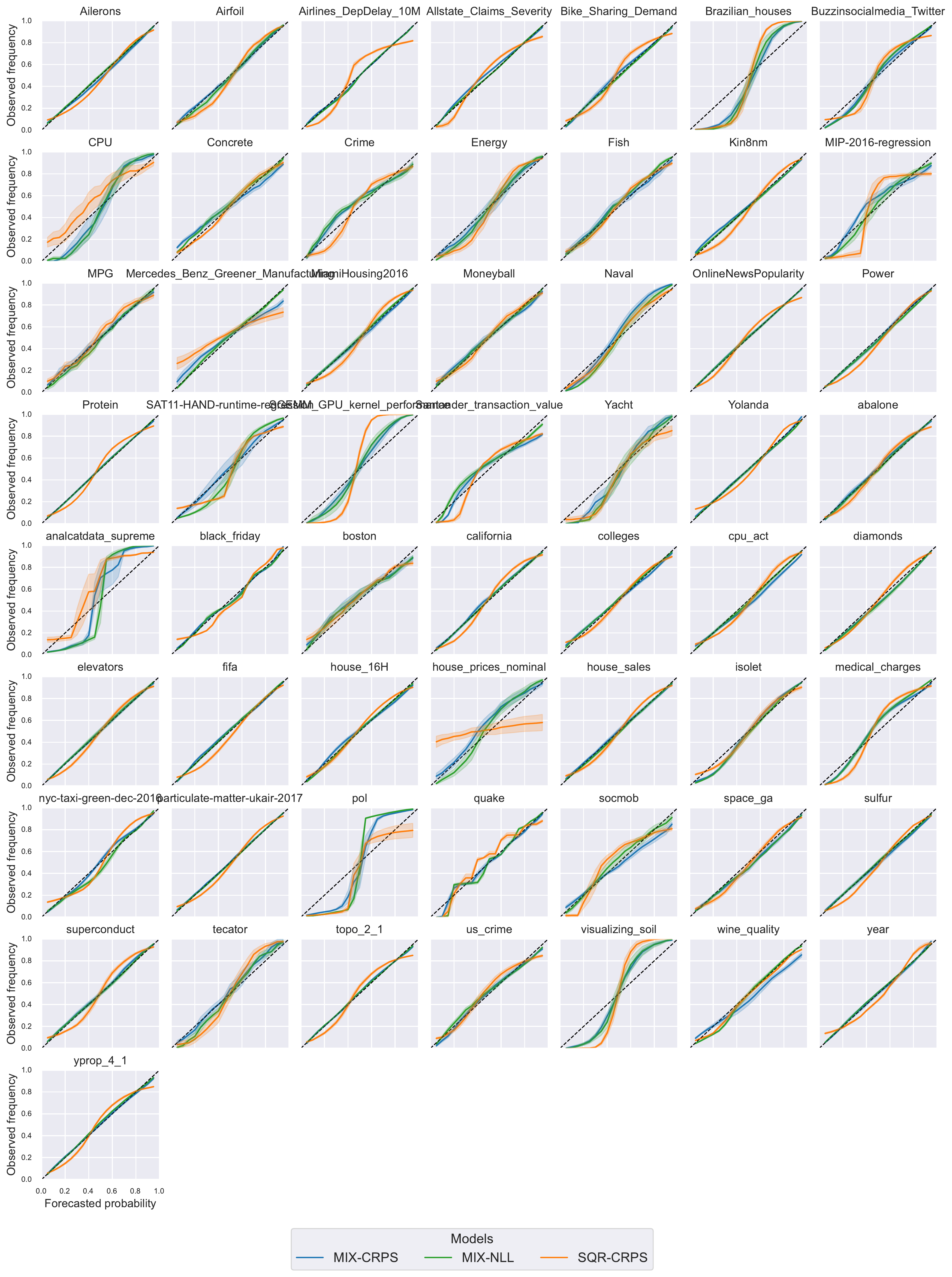}
		\caption{Reliability diagrams on all datasets for different models.}
		\label{fig:reliability_base_loss}
	\end{figure}
	
	\begin{figure}[H]
		\centering
		\includegraphics[width=0.95\linewidth]{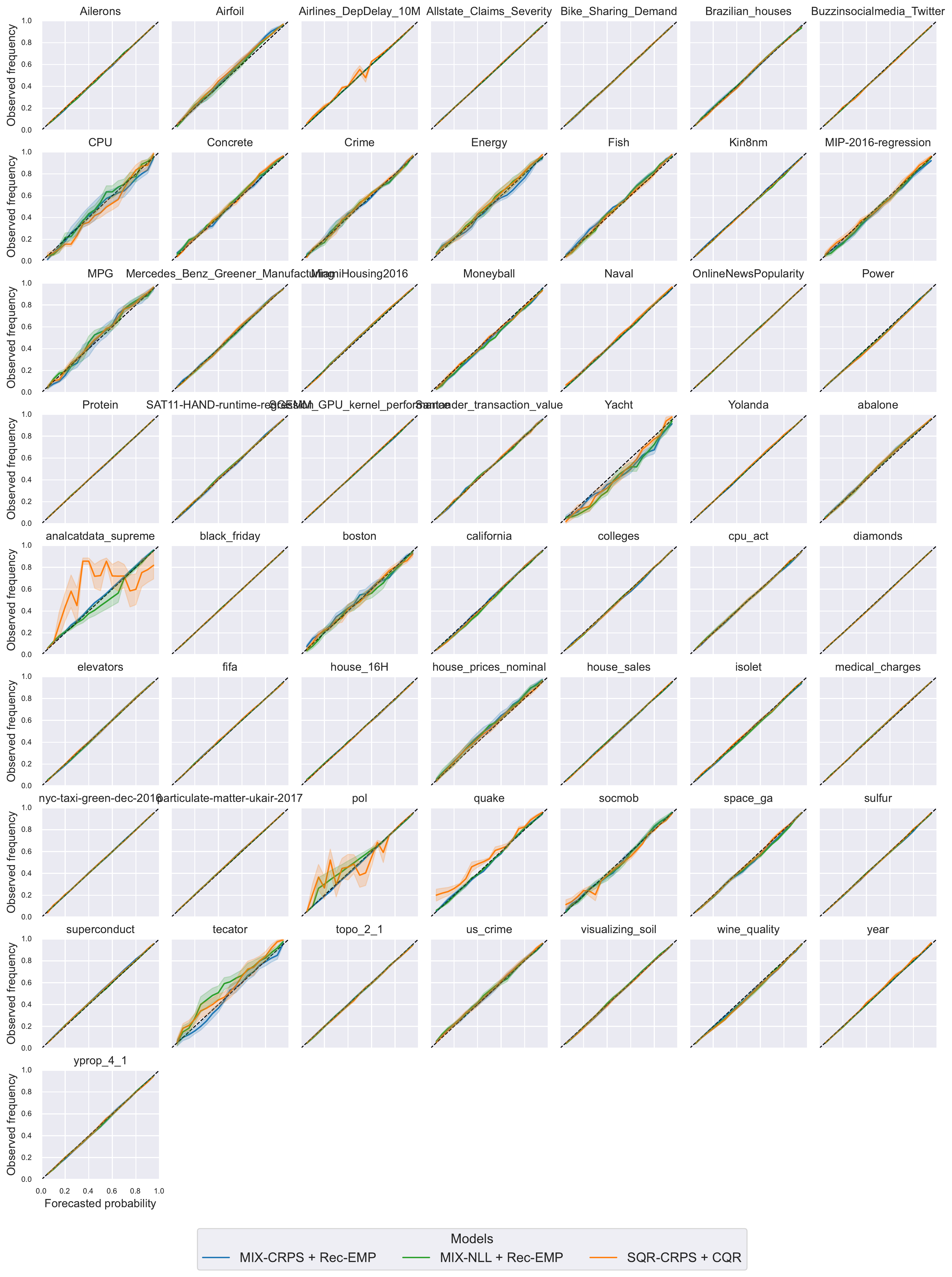}
		\caption{Reliability diagrams on all datasets for different models with post-hoc calibration.}
		\label{fig:reliability_posthoc}
	\end{figure}
	
	\section{Hyperparameters}
	\label{sec:hparams}
	
	In our experiments, we adopt a specific architecture consisting of 3 hidden layers with 100 units per layer, ReLU non-linearities, and a dropout rate of 0.2 on the last hidden layer. Early stopping with a patience of 30 is applied to select the epoch with the lowest base loss on the validation dataset.
	
	In this section, we delve into the performance of different model parameters, including the number of components in Gaussian mixture predictions, the number of quantiles in quantile predictions, and the number of hidden layers in the underlying models.

	\cref{fig:hparams/mixture_size} compares models that predict mixtures with varying numbers of components compared to the reference of 3 components. Notably, when there is only 1 component (yielding a single Gaussian prediction), the model's performance significantly deteriorates in terms of CRPS, NLL, and sharpness. However, as the number of components increases beyond 3, the differences become less pronounced.
	
	\cref{fig:hparams/n_quantiles} compares models with different numbers of quantiles compared to a reference of 64 quantiles. The results reveal a consistent pattern: predicting more quantiles consistently enhances performance in terms of probabilistic calibration, CRPS, and sharpness.
	
	\cref{fig:hparams/nb_hidden} compares models with different numbers of layers relative to a 3-layer model. It highlights that models with 2, 3, or 5 layers tend to yield superior performance in terms of CRPS and NLL.
	
	\begin{figure}[H]
		\centering
		\includegraphics[width=\linewidth]{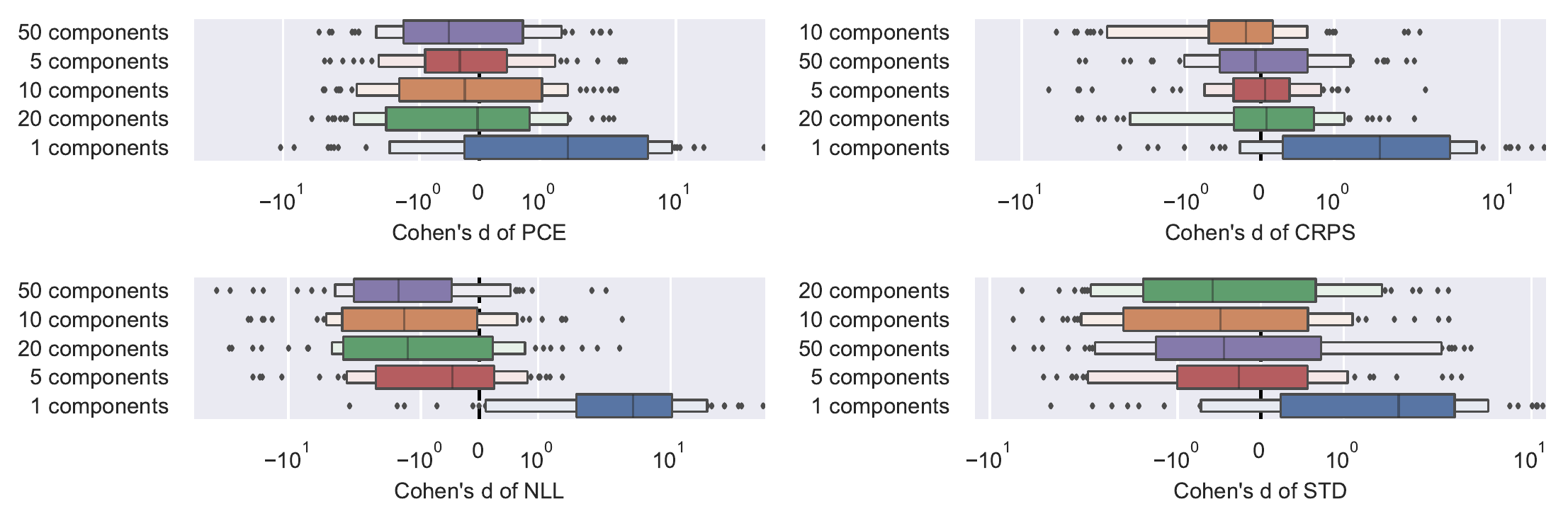}
		\caption{Comparison of models whose predictions are Gaussian mixtures with different numbers of components.
			All models are trained with NLL loss, without regularization or post-hoc method.
			The box plots show Cohen's d of different metrics on all datasets.
			Cohen's d is computed with respect to a model whose predictions are Gaussian mixtures with 3 components.}
		\label{fig:hparams/mixture_size}
	\end{figure}
	
	\begin{figure}[H]
		\centering
		\includegraphics[width=\linewidth]{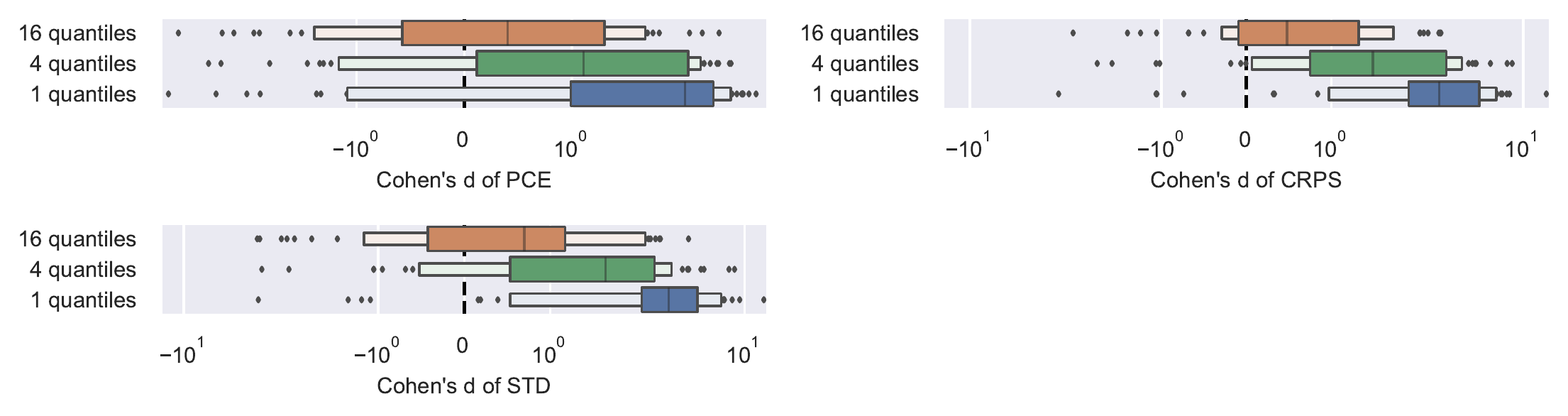}
		\caption{Comparison of models whose predictions are different numbers of quantiles.
			All models are trained with CRPS loss, without regularization or post-hoc method.
			The box plots show Cohen's d of different metrics on all datasets.
			Cohen's d is computed with respect to a model whose predictions are 64 quantiles.}
		\label{fig:hparams/n_quantiles}
	\end{figure}
	
	\begin{figure}[H]
		\centering
		\includegraphics[width=\linewidth]{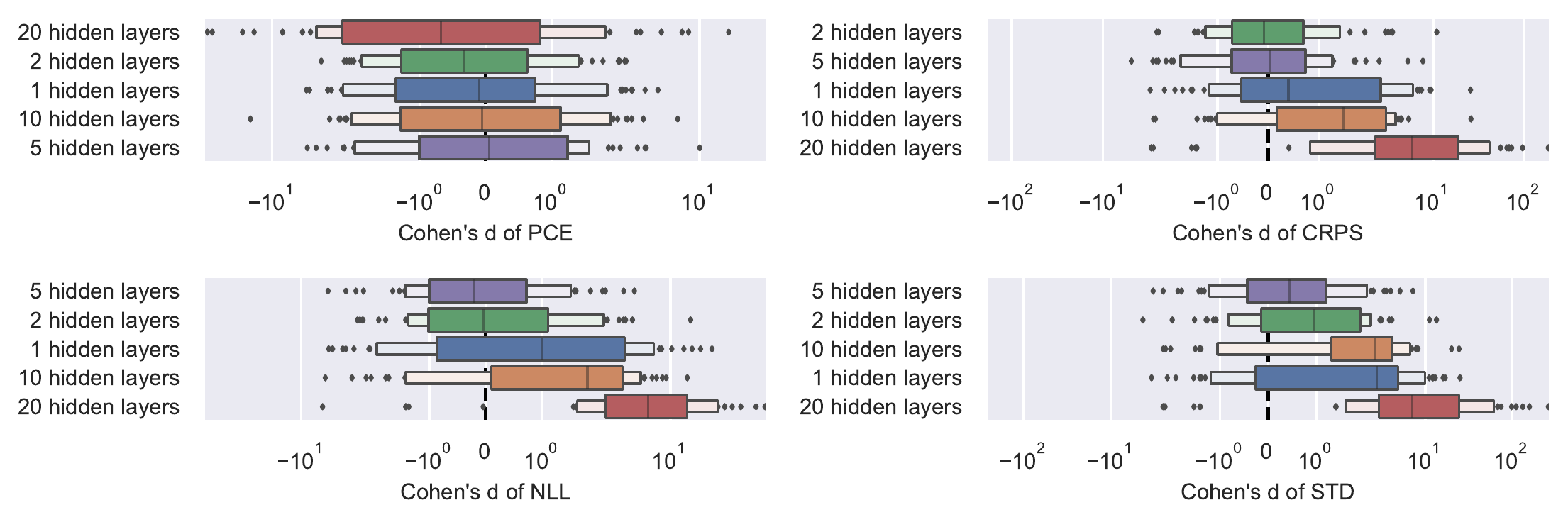}
		\caption{Comparison of models with different number of layers.
			All models predict Gaussian mixtures and are trained with NLL loss, without regularization or post-hoc method.
			The box plots show Cohen's d of different metrics on all datasets.
			Cohen's d is computed with respect to a model with 3 hidden layers.}
		\label{fig:hparams/nb_hidden}
	\end{figure}

	\section{Tabular Regression Datasets}
	\label{sec:appendix_datasets}

	\cref{table:datasets}  presents the datasets considered in our experiments. To ensure consistency, when datasets are available from multiple sources, we select one specific source per dataset, as indicated in \cref{fig:papers_vs_datasets}. Our selection prioritizes the suites 297, 299, and 269 of OpenML, followed by UCI datasets.
	
	In the OpenML suite 297, we discovered that the datasets \texttt{houses} and \texttt{california} are identical, and thus, we only included the \texttt{california} dataset in our analysis. Moreover, the UCI archive for the dataset \texttt{wine\_quality} contains two separate datasets for red and white wine. As there was no indication regarding the specific dataset(s) used in previous studies, we followed the approach of \citet{Grinsztajn2022-nu} and solely considered the dataset related to white wine. In \cref{fig:papers_vs_datasets}, other studies may have employed the alternative dataset or a combination of both datasets.

	\begin{table}[H]
		\fontsize{8.5pt}{9.5pt}
		\selectfont	
		\caption{Datasets}
		\label{table:datasets}
		\centering
		\begin{tabular}{lllrr}
	\toprule
	&  &  & \makecell{Nb of \\ training instances} & \makecell{Nb of \\ features} \\
	Group & Dataset & Abbrev. &  & \\
	\midrule
	\multirow[t]{12}{*}{UCI} & CPU & CP1 & 135 & 7 \\
	& Yacht & YAC & 200 & 6 \\
	& MPG & MPG & 254 & 7 \\
	& Energy & ENE & 499 & 9 \\
	& Crime & CRI & 531 & 104 \\
	& Fish & FIS & 590 & 6 \\
	& Concrete & CON & 669 & 8 \\
	& Airfoil & AI1 & 976 & 5 \\
	& Kin8nm & KIN & 5324 & 8 \\
	& Power & POW & 6219 & 4 \\
	& Naval & NAV & 7757 & 17 \\
	& Protein & PRO & 29724 & 9 \\
	\multirow[t]{19}{*}{OpenML 297} & wine\_quality & WIN & 4223 & 11 \\
	& isolet & ISO & 5068 & 613 \\
	& cpu\_act & CP2 & 5324 & 21 \\
	& sulfur & SUL & 6552 & 6 \\
	& Brazilian\_houses & BRA & 6949 & 8 \\
	& Ailerons & AIL & 8937 & 33 \\
	& MiamiHousing2016 & MIA & 9055 & 13 \\
	& pol & POL & 9750 & 26 \\
	& elevators & ELE & 10789 & 16 \\
	& Bike\_Sharing\_Demand & BIK & 11296 & 6 \\
	& fifa & FIF & 11740 & 5 \\
	& california & CAL & 13416 & 8 \\
	& superconduct & SUP & 13820 & 79 \\
	& house\_sales & HO3 & 14048 & 15 \\
	& house\_16H & HO1 & 14809 & 16 \\
	& diamonds & DIA & 35061 & 6 \\
	& medical\_charges & MED & 50000 & 3 \\
	& year & YEA & 50000 & 90 \\
	& nyc-taxi-green-dec-2016 & NYC & 50000 & 9 \\
	\multirow[t]{8}{*}{OpenML 299} & analcatdata\_supreme & ANA & 2633 & 12 \\
	& \makecell{Mercedes\_Benz\\\_Greener\_Manufacturing} & MER & 2735 & 735 \\
	& visualizing\_soil & VIS & 5616 & 5 \\
	& yprop\_4\_1 & YPR & 5775 & 82 \\
	& OnlineNewsPopularity & ONL & 25768 & 73 \\
	& black\_friday & BLA & 50000 & 23 \\
	& \makecell{SGEMM\_GPU\\\_kernel\_performance} & SGE & 50000 & 15 \\
	& \makecell{particulate-matter\\-ukair-2017} & PAR & 50000 & 26 \\
	\multirow[t]{18}{*}{OpenML 269} & tecator & TEC & 156 & 124 \\
	& boston & BOS & 328 & 22 \\
	& MIP-2016-regression & MIP & 708 & 111 \\
	& socmob & SOC & 751 & 39 \\
	& Moneyball & MON & 800 & 18 \\
	& house\_prices\_nominal & HO2 & 711 & 234 \\
	& us\_crime & US\_ & 1295 & 101 \\
	& quake & QUA & 1415 & 3 \\
	& space\_ga & SPA & 2019 & 6 \\
	& abalone & ABA & 2715 & 10 \\
	& \makecell{SAT11-HAND-\\runtime-regression} & SAT & 2886 & 118 \\
	& \makecell{Santander\_transaction\\\_value} & SAN & 2898 & 3611 \\
	& colleges & COL & 4351 & 34 \\
	& topo\_2\_1 & TOP & 5775 & 252 \\
	& Allstate\_Claims\_Severity & ALL & 50000 & 477 \\
	& Yolanda & YOL & 50000 & 100 \\
	& Buzzinsocialmedia\_Twitter & BUZ & 50000 & 70 \\
	& Airlines\_DepDelay\_10M & AI2 & 50000 & 5 \\
	\bottomrule
\end{tabular}

	\end{table}
	
	\let\clearpage\relax
	
\end{document}